\newcommand{\Rtypmax}{R_{\mathrm{typ-max}}}
\newcommand{\Rmax}{R_{\mathrm{max}}}
\newcommand{\diag}{\mathrm{diag}}
\newcommand{\R}{\Rbb}
\renewcommand{\vec}[1]{\ensuremath{\mathbf{#1}}}
\newcommand{\vecs}[1]{\ensuremath{\mathbf{\boldsymbol{#1}}}}
\newcommand{\mat}[1]{\ensuremath{\mathbf{#1}}}
\newcommand{\mats}[1]{\ensuremath{\mathbf{\boldsymbol{#1}}}}
\newcommand{\ten}[1]{\mat{\ensuremath{\boldsymbol{\mathcal{#1}}}}}
\xdef\csname \x bb\endcsname{\noexpand\ensuremath{\noexpand\mathbb{\x}}}
\xdef\csname \x cal\endcsname{\noexpand\ensuremath{\noexpand\mathcal{\x}}}
\xdef\csname \x t\endcsname{\noexpand\ensuremath{\noexpand\ten{\x}}}
\xdef\csname \x b\endcsname{\noexpand\ensuremath{\noexpand\mat{\x}}}
\xdef\csname \x b\endcsname{\noexpand\ensuremath{\noexpand\vec{\x}}}
\xdef\csname \x b\endcsname{\noexpand\ensuremath{\noexpand\vec{\x}}}
\newcommand{\A}{\mat{A}}
\newcommand{\B}{\mat{B}}
\newcommand{\C}{\mat{C}}
\newcommand{\T}{\ten{T}}
\newcommand{\U}{\mat{U}}
\newcommand{\I}{\mat{I}}
\newcommand{\M}{\mat{M}}
\newcommand{\x}{\vec{x}}
\newtheorem*{theorem*}{Theorem}
\newtheorem*{corollary*}{Corollary}%
\newtheorem*{proposition*}{Proposition}%
\newtheorem{theorem}{Theorem}%
\newtheorem{lemma}{Lemma}%
\newtheorem{definition}{Definition}%
\newtheorem{corollary}[theorem]{Corollary}%
\newtheorem*{pbm*}{Problem}%
\newtheorem*{algo*}{Algorithm}%
\newenvironment{manuallemma}[1]{%
  \manualnumberedlemma
}{\endmanualnumberedlemma}
\newenvironment{manualthm}[1]{%
  \manualnumberedtheorem
}{\endmanualnumberedtheorem}
\newenvironment{manualcor}[1]{%
  \manualnumberedcorollary
}{\endmanualnumberedcorollary}
\DeclareMathOperator*{\rank}{rank}
\newcommand{\nstates}{n}
\newcommand{\szerosymbol}{\alpha}
\newcommand{\szero}{\vecs{\szerosymbol}}
\newcommand{\sinfsymbol}{\omega}
\newcommand{\sinf}{\vecs{\sinfsymbol}}
\DeclareDocumentCommand{\wa}{  O{A} O{\szero} O{\sinf} }%
{(#2,\{\mat{#1}^\sigma\}_{\sigma\in\Sigma},#3)}
\DeclareDocumentCommand{\waR}{  O{A} O{\Rbb^\nstates} O{\szero} O{\sinf} }%
{(#2,#3,\{\mat{#1}^\sigma\}_{\sigma\in\Sigma},#4)}
\newcommand{\vvsinfsymbol}{\Omega}
\newcommand{\vvsinf}{\mats{\vvsinfsymbol}}
\DeclareDocumentCommand{\vvwa}{  O{A} O{\szero} O{\vvsinf} }%
{(#2,\{\mat{#1}^\sigma\}_{\sigma\in\Sigma},#3)}
\newcommand{\tzerosymbol}{\alpha}
\newcommand{\tzero}{\vecs{\tzerosymbol}}
\newcommand{\tinfsymbol}{\omega}
\newcommand{\tinf}{\vecs{\tinfsymbol}}
\DeclareDocumentCommand{\wta}{ O{T} O{\Rbb^\nstates} O{\tzero} O{\tinf} O{\Fcal}}%
{(#2,#3,\{\ten{#1}^g\}_{g\in #5_{\geq 1}},\{#4^\sigma\}_{\sigma\in #5_0})}
\DeclareDocumentCommand{\trees}{g}{\IfNoValueTF{#1}{\mathfrak{T}}{\mathfrak{T}_{#1}}}
\DeclareDocumentCommand{\contexts}{g}{\IfNoValueTF{#1}{\mathfrak{C}}{\mathfrak{C}_{#1}}}
\newcommand{\gwmprod}{\diamond}
\DeclareDocumentCommand{\gwm}{  O{M} O{\Fbb^\nstates}}{(#2, \{\ten{#1}^x\}_{x\in\Sigma})}
\DeclareDocumentCommand{\gwmcirc}{  O{M} O{\Rbb^\nstates}}{(#2, \{\mat{#1}^\sigma\}_{\sigma\in\Sigma})}
\DeclareDocumentCommand{\dgwm}{ O{M} O{\Fbb^\nstates}}{(#2, \{\ten{#1}^x\}_{x\in\Sigma},\gwmprod)}
\newcommand{\cprank}[1]{\mathrm{rank}_{\mathrm{CP}}(#1)}
\newcommand{\hCPRNN}[1]{\mathcal{H}_{\mathrm{CPRNN}}(#1)}
\newcommand{\hRNN}[1]{\mathcal{H}_{2\mathrm{RNN}}(#1)}
\newcommand{\hMIRNN}[1]{\mathcal{H}_{\mathrm{MIRNN}}(#1)}
\newcommand{\hCPBIRNN}[1]{\mathcal{H}_{\mathrm{CPBIRNN}}(#1)}
\newcommand{\linearmaps}[1]{\mathcal{L}^{#1}}
\newcommand{\CP}[1]{ [\![#1]\!]}
\xdef\csname \x bb\endcsname{\noexpand\ensuremath{\noexpand\mathbb{\x}}}
\xdef\csname \x cal\endcsname{\noexpand\ensuremath{\noexpand\mathcal{\x}}}
\xdef\csname \x ten\endcsname{\noexpand\ensuremath{\noexpand\ten{\x}}}
\xdef\csname \x mat\endcsname{\noexpand\ensuremath{\noexpand\mat{\x}}}
\xdef\csname \x vec\endcsname{\noexpand\ensuremath{\noexpand\mat{\x}}}
\icmltitlerunning{A Tensor Decomposition Perspective on Second-order RNNs}
\begin{document}

\twocolumn[
\icmltitle{A Tensor Decomposition Perspective on Second-order RNNs}

\icmlsetsymbol{equal}{*}

\begin{icmlauthorlist}
\icmlauthor{Maude Lizaire}{udm}
\icmlauthor{Michael Rizvi-Martel}{udm}
\icmlauthor{Marawan Gamal Abdel Hameed}{udm}
\icmlauthor{Guillaume Rabusseau}{udm,cif}
\end{icmlauthorlist}

\icmlaffiliation{cif}{CIFAR AI Chair}
\icmlaffiliation{udm}{Mila \& DIRO, Université de Montréal, Montreal, Canada}

\icmlcorrespondingauthor{Maude Lizaire}{maude.lizaire@umontreal.ca}
\icmlcorrespondingauthor{Guillaume Rabusseau}{grabus@iro.umontreal.ca}

\icmlkeywords{Machine Learning, ICML, Recurrent Neural Networks, CP Decomposition, Tensor Decomposition, Expressivity, rank}

\vskip 0.3in
]

\printAffiliationsAndNotice{}  

\begin{abstract}
Second-order Recurrent Neural Networks (2RNNs) extend RNNs by leveraging second-order interactions for sequence modelling. These models are provably more expressive than their first-order counterparts and have connections to well-studied models from formal language theory. However, their large parameter tensor makes computations intractable. To circumvent this issue, one approach known as MIRNN consists in limiting the type of interactions used by the model. Another is to leverage tensor decomposition to diminish the parameter count. In this work, we study the model resulting from parameterizing 2RNNs using the CP decomposition, which we call CPRNN. Intuitively, the rank of the decomposition should reduce expressivity. We analyze how rank and hidden size affect model capacity and show the relationships between RNNs, 2RNNs, MIRNNs, and CPRNNs based on these parameters. We support these results empirically with experiments on the Penn Treebank dataset which demonstrate that, with a fixed parameter budget, CPRNNs outperforms RNNs, 2RNNs, and MIRNNs with the right choice of rank and hidden size.
\end{abstract}

\begin{figure}[t]
\includegraphics[scale=0.3]{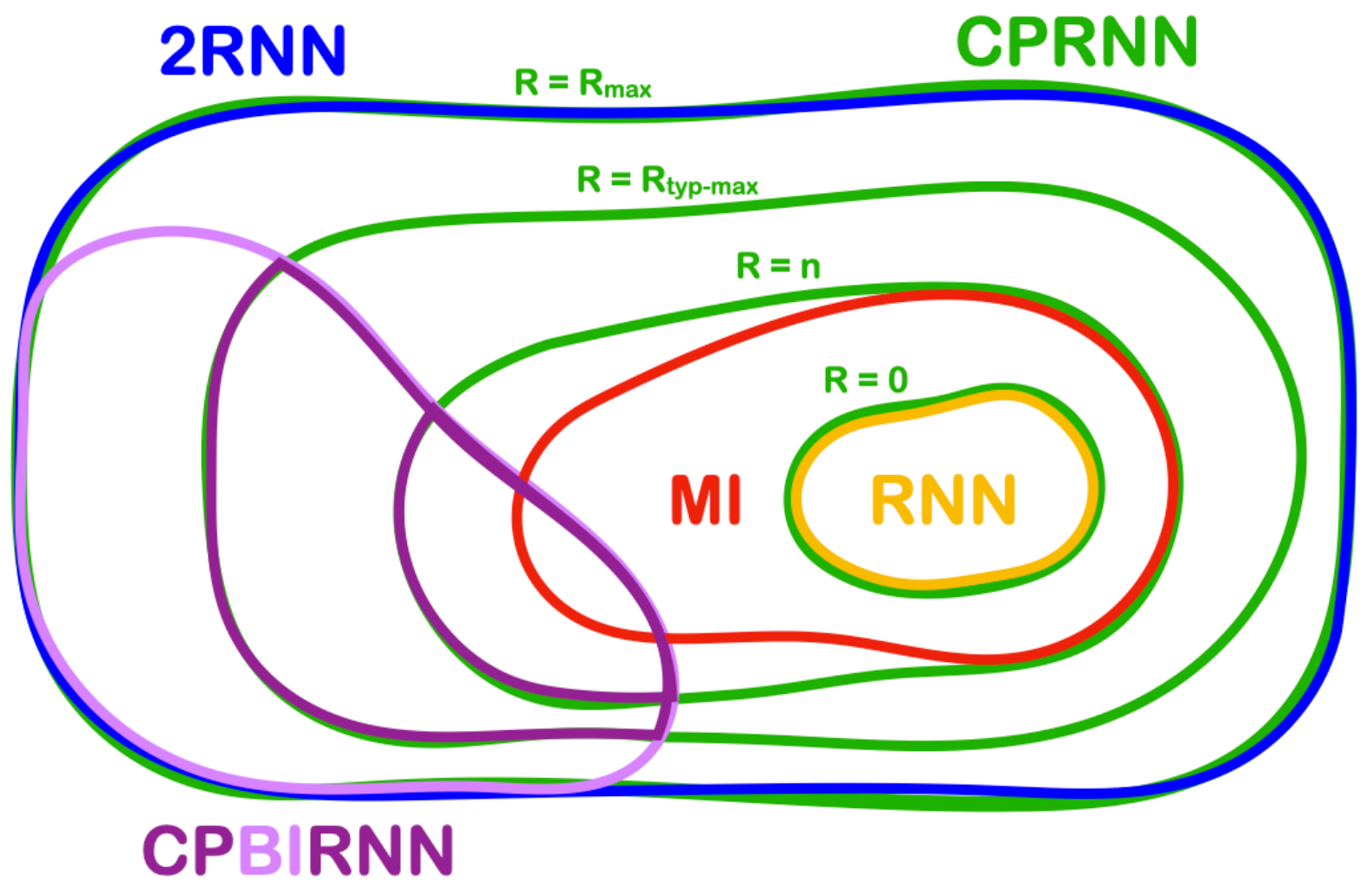}
\caption{Overview of expressivity relationships among Recurrent Neural Network architectures. 2RNNs (blue) encompasses all other models. BIRNN (purple) and RNN (yellow) are a subclasses of 2RNNs as they only have first-order and second-order interactions respectively. MIRNN (red) includes element wise multiplicative interactions. CPRNNs expressive power varies with the rank $R$, reaching the same capacity as 2RNNs when $R=R_{\text{max}}$.}\label{fig:venn_diagram}
\end{figure}

\section{Introduction}
Recurrent neural networks (RNNs) have been pivotal to the deep learning revolution~\cite{elman1990finding, hopfield1982neural}. Given their strong inductive bias, they are a natural choice when it comes to sequence modelling.
Although transformers~\cite{vaswani2017attention} remain the de facto choice for language modelling, there exists many domains in which RNNs still excel. Moreover, with the advent of state space models~\cite{gu2020hippo,gu2021efficiently}, such architectures have seen an increase in popularity~\cite{orvieto2023resurrecting}, especially with regard to long-range dependencies. Second-order Recurrent Neural Networks (2RNNs) are a generalization of RNNs which integrates second-order interactions between hidden states and inputs. These second-order interactions make 2RNNs strictly more expressive than their first-order counterparts. However, the tensors parameterizing the second-order term quickly become very large as the hidden state and input dimensions grow~\citep{Giles1989, goudreau1994first}, thus making inference with such models intractable. 
\begin{figure*}[th]
\begin{center}
\includegraphics[width=1\textwidth]{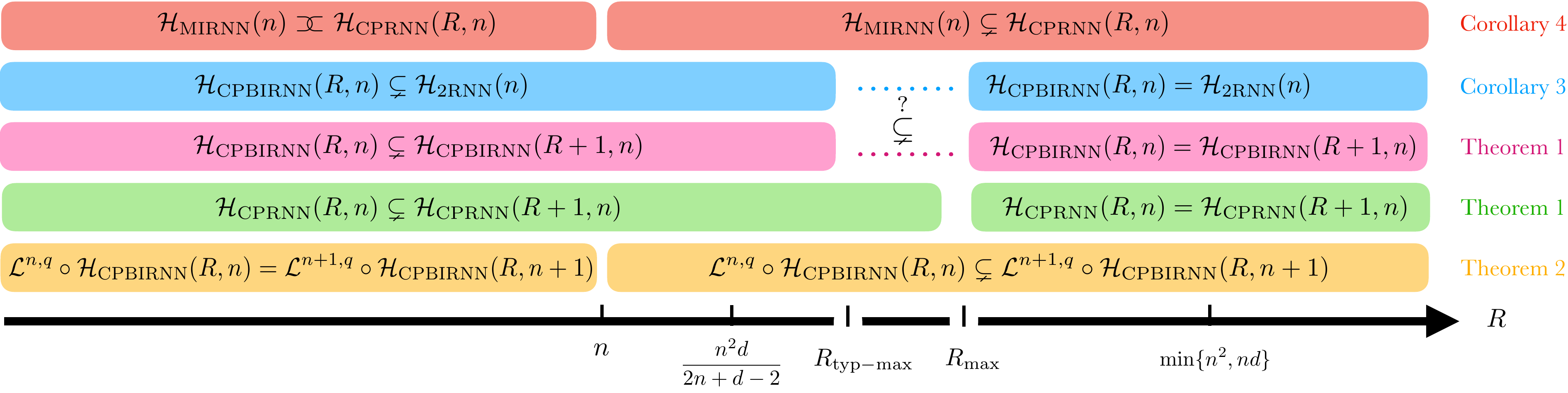}
\end{center}
\vspace{-0.5cm}
\caption{Relations of expressivity between CP(BI)RNNs, 2RNNs and MIRNNs as a function of the rank $R$ of the CP(BI)RNN; $n$ denotes the hidden dimension, $d$ the input dimension and $\Rmax$~(resp. $\Rtypmax$) the maximal CP rank~(resp. maximal typical CP rank).  Theorems~\ref{thm:rankcprnn}\&\ref{thm:hiddencprnn} and Corollaries~\ref{cor:2rnn}\&\ref{thm:mirnn} detail these results.}\label{fig:sumary}
\end{figure*}
A simple but drastic approach to reduce the parameter count is to keep only the second-order interactions from the component-wise product between input and hidden state~\citep{wu2016multiplicative}. The resulting model, Multiplicative Integration RNN (MIRNN), was shown to alleviate the vanishing/exploding gradient problem and to improve performance on language modelling tasks. A linear version of this model without first-order interactions was also introduced under the name \textit{Recurrent Arithmetic Circuits} to analyze the benefits of depth in RNNs~\citep{levine2018benefits}. 

A more sophisticated approach would be to leverage tensor decomposition to compress the third-order tensor parameterizing the 2RNN. While several decompositions could be used to reduce parameter count in 2RNNs, in this work we focus on the popular CP decomposition~\citep{kiers2000towards}. We call the resulting model CPRNN.
\citet{sutskever2011generating} empirically demonstrated the potential of this architecture by successfully deploying it on one of the largest RNN applications. However, there was no theoretical analysis of this new model, a gap we propose to fill with this work.

It is interesting to note that while the model proposed in~\citet{sutskever2011generating} is indeed a CPRNN---that is a 2RNN with second-order weights parameterized by a CP decomposition--- there is no mention of tensor decomposition or of the CP decomposition in their work.

Similarly to matrices, the sets of tensors of CP rank at most $R$, for increasing values of $R$, form a nested family in the space of tensors. Intuitively, an increase in rank strictly increases the capacity up to a point of saturation. This naturally leads to the following questions: 
\begin{itemize}
    \item How does increasing the capacity of the tensor parameter relate to the expressivity of CPRNNs? 
    \item  How does the point of saturation in the tensor parameter space translate in the function space of CPRNNs? In particular, how is it affected by the hidden dimension? 
    \item More generally, how do the rank and the hidden dimension interplay in controlling the capacity of CPRNNs?
\end{itemize}

To the best of our knowledge, our work is the first to formally address these questions. We start from the observation that the rank in CPRNNs acts as a hyper-parameter interpolating between first-order and second-order RNNs. We formalize this observation by showing that CPRNNs are equivalent to 2RNNs above some maximal rank. We also show that MIRNNs are a special case of CPRNNs. More precisely, we show that MIRNNs correspond to a specific point on this interpolation line where the rank and hidden dimension of a CPRNN are both equal. In fact, we show that CPRNNs are strictly more expressive than MIRNNs past this threshold, i.e., when  rank is greater than  hidden size. Tensor decompositions thus prove to be an effective approach to characterize the expressivity of RNNs with different degrees of multiplicative interactions. Figure \ref{fig:venn_diagram} illustrates the hierarchy established by the rank of CPRNNs, where the threshold values between different classes explored in this paper are directly connected to the hidden size.

We investigate to which extent theory holds in practice for models trained on real data by conduction a set of experiments on the Penn Tree Bank dataset\footnote{Code base for this paper can be found at \url{https://github.com/MaudeLiz/cprnn} }. We first show that, as expected, the performance of CPRNNs increases as either the rank or hidden dimension grows, and that the rank of CPRNNs naturally controls the parameter/expressivity tradeoff, interpolating between RNNs and 2RNNs in a fine-grained fashion. More interestingly, our experimental results show that for any model size budget there always exists a choice of hidden size and rank that fits the parameter budget and outperforms all 2RNNs, RNNs and MIRNNs with the same parameter count. Another interesting behavior of CPRNNs illustrated by our experiments is that, since rank and hidden size are tied when the number of parameters is fixed, there are two underfitting regimes: one when the rank is too small~(where the rank acts as the bottleneck) and one when it is too large~(where it is the hidden dimension that is too small and acts as a bottleneck). 

\paragraph{Summary of contributions} We present a formal study of CPRNNs: a natural approach to reducing parameter count in second-order RNNs. While CPRNNs have been empirically considered previously, to the best of our knowledge, this work is the first to thoroughly and rigorously analyze it from a formal perspective. Our analysis establishes several novel and non-trivial results related to the expressivity of CPRNNs, how the rank and hidden size of CPRNN interact, and how CPRNNs encompass previous second-order recurrent models. Beyond this theoretical analysis, we design several experiments on real data showcasing various properties and empirical behaviours of CPRNNs. In particular, we show that CPRNNs  always offer a better tradeoff between expressivity and size than RNNs, 2RNNs and MIRNNs. 

\paragraph{Related work}
Extensive work has been done on the expressive power of RNNs and how they relate to formal languages (see e.g., \citet{siegelmann1994analog,chen2017recurrent,weiss2018practical,korsky2019computational,merrill2019sequential,merrill2020formal,deletang2022neural}). In the case of 2RNNS there is a direct correspondence with finite states machines. Indeed, the linear form of 2RNNs was shown to generalize weighted finite automaton (WFA) to non-discrete inputs \citep{li2022connecting}. From a language modelling perspective, multiplicative interactions draw interest for their potential to represent more complex dependencies such as compositional semantics \citep{Irsoy2014ModelingCW, Jayakumar2020Multiplicative}. This motivated the design of various multiplicative RNNs architectures \citep{tjandra2016gated, krause2017,sutskever2011generating,wu2016multiplicative,su2024language}, among which CPRNNs and MIRNNs stand out for their parameter efficiency. At the same time, tensor decomposition methods have been used to compress sequential models \citep{yang2017tensor, ye2018learning,ma2021cp,wang2021kronecker, tjandra2018tensor} and tackle optimization problems involving multivariate higher-order polynomial functions \citep{yu2017long,ayvaz2022cpd,dubey2022scalable}. More broadly, tensor decompositions have been used to study the expressivity of different neural network architectures, in particular to explore the benefits of depth~\citep{cohen2016convolutional, cohen2016expressive, cohen2016inductive, sharir2017expressive, levine2018benefits, alexander2023makes, razin2024ability} and structural alignment between data and model~\citep{balda2018tensor, khrulkov2017expressive, khrulkov2019generalized}. The analysis in most of these work relies on comparing different tensor decomposition (or tensor network) structures, in contrast, our analysis relies on studying the effect of the rank of a fixed tensor network structure. This strategy of leveraging the rank of a tensor decomposition to characterize the expressive power of models has also  been used for deep polynomial networks in~\citep{kileel2019expressive}.  

Finally, it is worth mentioning that the popular family of state space models~\cite{hamilton1994state,gu2021efficiently} also have second-order counterparts~\cite{sattar2022finite}. These model variants appear mostly in dynamical systems literature and are not quite as well known in the machine learning community.

\section{Preliminaries}
We first introduce notations, tensor decomposition and the various RNN models studied in this work. 

\subsection{Notation}
Vectors, matrices and tensors are respectively represented in bold $\vec{v} \in \Rbb^{d_1}$, uppercase bold $\Mb \in \Rbb^{d_1\times d_2}$ and  calligraphic $\Tt \in \Rbb^{d_1\times \dots \times d_p}$. The $n$-mode product of a tensor with a \textit{vector}\footnote{Note that this notation slightly differs  from the one introduced in \cite{kolda2009tensor} where $\times_n$ denotes the $n$-mode product of a tensor with a \textit{matrix}.} 
is defined by: 
$(\Tt~\times_n~\vec{v})_{i_1,\dots,i_{n-1},i_{n+1},\dots,i_{p}}~=~\sum_{i_n=1}^{d_n} \Tt_{i_1,\dots,i_{p}}\;\vec{v}_{i_n}$. The Hadamard product, or  element-wise product, between vectors of the same size is noted $\vec u \odot \vec v$. The outer product noted $\vec u \circ \vec v$ is given by $(\vec u \circ \vec v)_{ij} = \vec u_i \vec v_j$.

\subsection{CP decomposition}
A \textit{CP decomposition} of rank $R$ factorizes a tensor $\Tt \in \Rbb^{d_1 \times d_2 \times d_3}$ into a sum of rank one tensors, $\Tt = \sum_{r=1}^R \vec{a}_r \circ \vec{b}_r \circ \vec{c}_r \equiv \CP{\A, \B, \C},$
where $\vec{a}_r \in \Rbb^{d_1}$, $\vec{b}_r \in \Rbb^{d_2}$ and $\vec{c}_r \in \Rbb^{d_3}$, and the factors matrices $\A \in \R^{d_1\times R}, \B \in \R^{d_2\times R}, \C \in \R^{d_3\times R}$ have as columns the vectors $\vec a_r,\vec b_r$ and $\vec c_r$, respectively. Using a CP decomposition reduces the number of parameters from $\mathcal O(d^3)$ to $\mathcal O(Rd)$ where $d = max\{d_1, d_2, d_3\}$.

The \textit{CP rank} of a tensor, noted $\cprank{\Tt}$, is the minimal value $R$ for which an exact CP decomposition of rank $R$ exists. 
Note that what we call a CP decomposition may not be minimal: writing a CP decomposition $\T = \CP{\A, \B, \C}$ does not imply that $\cprank{\T}=R$, but only that $\cprank{\T}\leq R$~(i.e., there may exist a smaller CP decomposition of $\T$).

\subsection{Models}
We now define the different RNN variants considered in this work and briefly explain how they relate to one another. Figure~\ref{fig:equations_summary} summarizes the set models presented.  

\begin{definition}[RNN]\label{def:RNN}
A Recurrent Neural Network $\mathcal{R}=\langle \vec{h}^0, \Ub, \Vb, \bb, \sigma \rangle$ of hidden size $n$ is parameterized by an initial hidden state $\vec{h}^0 \in \Rbb^n$,  weight matrices $\Ub \in \Rbb^{n \times d}$ and $\Vb \in \Rbb^{n \times n}$, a bias term $\bb \in \Rbb^n$ and an activation function $\sigma:\Rbb^n \rightarrow \Rbb^n$. 
Given an input sequence $(\vec{x}^1, \vec{x}^2,..., \vec{x}^T)$, a RNN computes, for each time steps $t$, the following hidden state:
$$\vec{h}^t = \sigma( \Vb\vec{h}^{t-1} + \Ub \vec{x}^t + \bb).$$
\end{definition}

Second-order RNNs extend RNNs by incorporating bilinear interactions between input and (previous) hidden state. 
\begin{definition}[2RNN]
A Second-order Recurrent Neural Network $\mathcal{A}=\langle \vec{h}^0, \At, \Ub, \Vb, \bb, \sigma \rangle$ of hidden size $n$ is parameterized by a recurrent weight tensor $\At \in \Rbb^{n \times d \times n}$ and the other terms as in Definition \ref{def:RNN}. Given an input sequence $(\vec{x}^1, \vec{x}^2,..., \vec{x}^T)$, a 2RNN computes the following hidden state for each time step $t$ :
$$\vec{h}^t = \sigma(\At \times_1 \vec{h}^{t-1}\times_2 \vec{x}^t + \Vb\vec{h}^{t-1} + \Ub \vec{x}^t + \bb).$$
\end{definition}

In the absence of the second-order term ($\At=0$), we recover the definition of a (first-order) RNN. 
When the model is restricted to the bilinear term (\textit{i.e.} $\Ub$, $\Vb$ and $\vec b$ are null), it is referred to as a BIRNN. It is worth mentioning that the class of linear BIRNNs exactly corresponds to weighted languages recognized by weighted finite automata~\cite{li2022connecting}. The next class of models, CPRNNs, parameterizes the second-order weight tensor of 2RNNs using the CP decomposition. 
\begin{definition}[CPRNN]\label{def:cprnn}
A CP Recurrent Neural Network $\mathcal{A}_{CP}=\langle \vec{h}^0, \Ab, \Bb, \Cb, \Ub, \Vb, \bb, \sigma \rangle$ with hidden size $n$ and rank $R$ is a 2RNN whose second-order term is parameterized by a CP decomposition $\CP{\Ab, \Bb, \Cb}$ with $\Ab \in \Rbb^{n \times R}$, $\Bb \in \Rbb^{d \times R}$ and $\Cb \in \Rbb^{n \times R}$. Given an input sequence $(\vec{x}^1, \vec{x}^2,..., \vec{x}^T)$, a CPRNN computes for each time steps $t$ the following hidden state:
$$\vec{h}^t = \sigma(\CP{\Ab, \Bb, \Cb} \times_1 \vec{h}^{t-1}\times_2 \vec{x}^t  + \Vb\vec{h}^{t-1} + \Ub \vec{x}^t + \bb).$$
\end{definition}
When the first-order terms are null, the model is referred to as a CPBIRNN ($\mathcal{A}_{CPBI}$). 
Note that even though the hyper-parameter $R$ is called the \emph{rank} of the CPRNN (or CPBIRNN), the CP rank of the tensor $\CP{\Ab, \Bb, \Cb}$ is only upper-bounded by $R$~(it can be strictly smaller, e.g. choosing all weights to be ones, in which case the CP rank of $\CP{\Ab, \Bb, \Cb}$ would be one for any value of $R$).  The last model we introduce is the MIRNN, which was proposed in~\cite{wu2016multiplicative} as a way to drastically reduce the number of parameters of 2RNNs. 

\begin{definition}[MIRNN]\label{def:mirnn}
A Recurrent Neural Network with Multiplicative Integration $\mathcal{A}_{MI}=\langle \vec{h}^0, \boldsymbol{\alpha},\boldsymbol{\beta_1}, \boldsymbol{\beta_2}, \Ub, \Vb, \sigma \rangle$ of hidden size $n$ is parameterized as Definition \ref{def:RNN} with additional gate vectors $\boldsymbol{\alpha},\boldsymbol{\beta_1}, \boldsymbol{\beta_2} \in \Rbb^n$. The hidden state at time step $t$ computed by a MIRNN given an input sequence $(\vec{x}^1, \vec{x}^2,..., \vec{x}^T)$ is:
$$\vec{h}^t = \sigma(\boldsymbol{\alpha} \odot \Vb \hb^{t-1} \odot \Ub \xb^{t} +\boldsymbol{\beta_1} \odot \Vb\vec{h}^{t-1} + \boldsymbol{\beta_2} \odot \Ub \vec{x}^t +  \bb).$$
\end{definition}

The activation functions considered in this work are either bijective~(e.g. linear and $\tanh$) or the Rectified Linear Unit (ReLU)~\citep{nair2010rectified} applied element-wise. We use the notation $\vec{a}^t$ to refer to the pre-activation vectors at each time step, that is the hidden state prior the activation function, $\vec{h}^t = \sigma(\vec{a}^t)$.

\begin{figure}[h]
\includegraphics[scale=0.2]{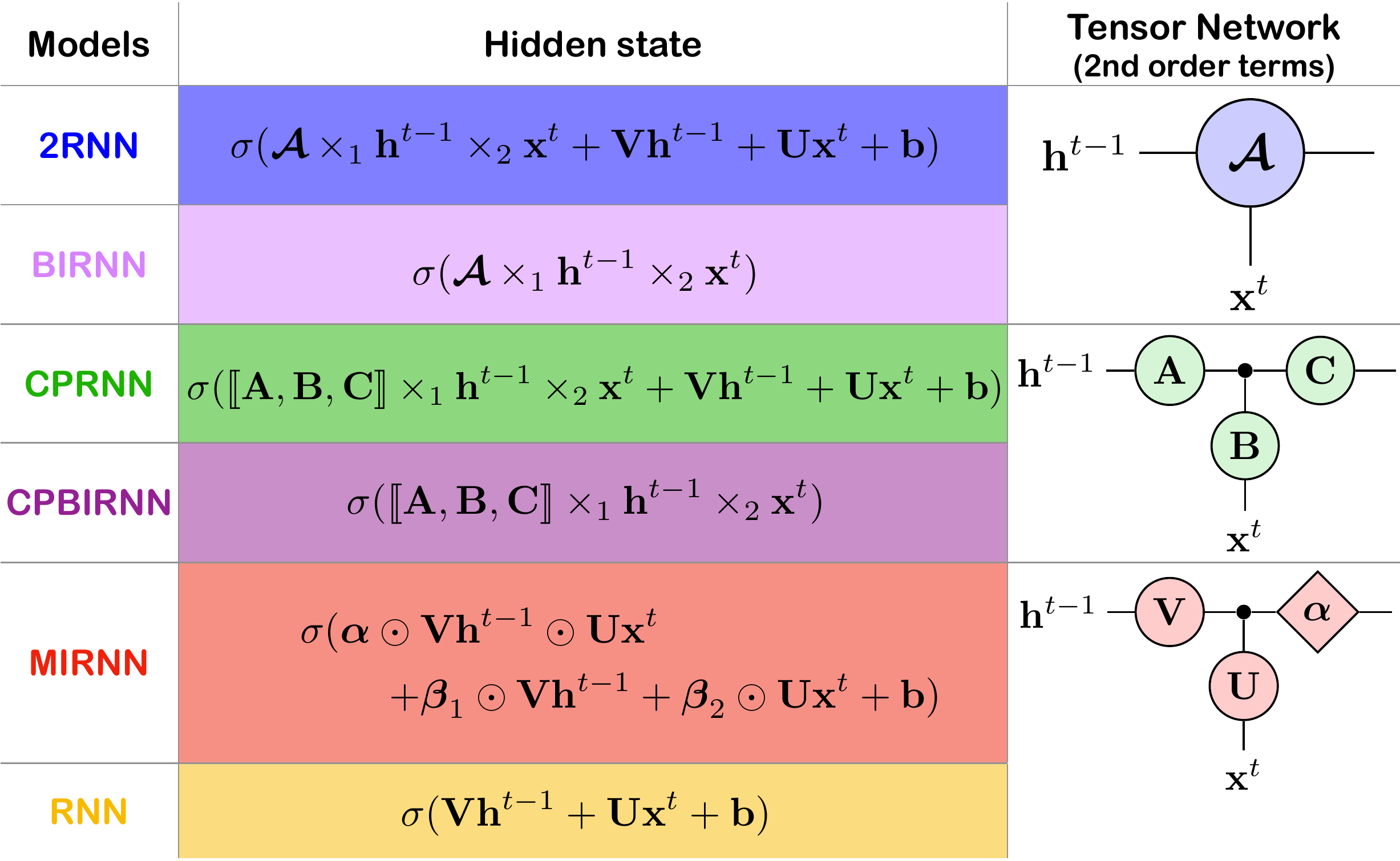}
\vspace{-0.5cm}
\caption{Different recurrent models considered in this work with their hidden state computation and the tensor network representation of their second-order term (the diamond shape represents a diagonal matrix).}\label{fig:equations_summary}
\end{figure}

\section{Theoretical results}
In order to compare the expressive power of CPRNNs for different ranks and hidden sizes~(and with other models), we formally define the set of functions they can represent. 
\begin{definition}\label{def:functionclass}
We denote by $\hCPRNN{R,n}$ the set of functions $h$ mapping input sequences to the hidden states sequences computed by CPRNNs of rank $R$ and hidden size $n$: $h(\vec{x}^1, \vec{x}^2,..., \vec{x}^T) = (\vec{h}^1, \vec{h}^2,..., \vec{h}^T)$,
where the hidden state vectors $\vec{h}^t$ are defined in Def.~\ref{def:cprnn},
\end{definition}

In the same way, we define the classes of functions $\hCPBIRNN{R,n}$, $\hRNN{n}$ and $\hMIRNN{n}$ respectively computed by CPBIRNNs, 2RNNs and MIRNNs of hidden size $n$, respectively. The theoretical findings presented in this section are summarized in Figure~\ref{fig:sumary}.

\subsection{Expressive power of CPRNNs: Relating tensor space to function space through rank}

Intuitively, the rank of a CPRNN is an hyper-parameter controlling the tradeoff between expressive power and parameter efficiency. We thus expect the expressivity to increase with rank, up to some potential saturating point. 
Let $\Rmax$ be the maximal CP rank for a family of tensors sharing the same dimensions:
$$R^{d_1, d_2 , d_3}_{\max} = \max \{\cprank{\T} \mid \T \in \R^{d_1\times d_2 \times d_3} \}.$$ 
$\Rmax$ represents the saturation point in tensor space, which translates into a saturation in expressivity of CPRNNs for ranks above that threshold. This comes from the fact that \textit{an inclusion over the space of tensors parameterizing CPRNNs directly implies an inclusion in the space of functions computed by CPRNNs}. 

Indeed, to see this, consider CPRNNs of ranks $R$ and $R+1$. The second-order terms computed by CPRNNs of rank $R$ are parameterized by CP decompositions that are less expressive than the ones from CPRNNs with rank $R+1$~(\textit{i.e.} there is inclusion in the tensor space). Looking at the computation of their hidden state $\vec h^t$, it is easy to see that any function computed by a CPRNNs $\mathcal{A}_{CP}$ of rank $R$ can be computed by a CPRNNs of rank $R+1$. It suffices to use the same weights as $\mathcal{A}_{CP}$ and to pad the extra dimensions with zeros. When $R>\Rmax$, CPRNNs of ranks $R$ and $R+1$ become equally expressive. Indeed, the expressive power of the CP decompositions parameterizing their second-order terms saturates, therefore they both can be parameterized by a minimal CP decomposition padded with zeros. 

Now, to show that a class of CPRNNs is \textit{strictly} more expressive than another one, the direct implication from the expressivity of the tensor space does not hold. Indeed, under the argument for inclusion discussed above, lies the assumption that the computation at each time step $\vec h^t$ is a function of two variables, $(\vec x^t, \vec h^{t-1}) \mapsto \vec h^t$ and that $\vec h^{t-1}$ is "free". But it is not, it is a function of $\vec h^0$ and the previous inputs. This is crucial when trying to prove \emph{strict inclusion } results: \textit{a strict inclusion of sets of tensor parameters does not imply strict inclusion of the corresponding classes of recurrent functions}. In other words, the recurrent nature of the function $h$ cannot be ignored. The relation between tensor space and function space thus becomes non-trivial when considering strict inclusion, raising the question: under which conditions does the expressive power of CPRNNs \emph{strictly} increase with rank?

To address this question we consider the notion of \textit{typical} ranks. These are CP rank values that have a non-zero probability of occurring in random tensors (\textit{i.e.} ranks $R$'s such that the set of  tensors of CP rank $R$ have positive Lebesgue measure).
Our first theoretical result states that the expressivity of CPBIRNNs increases with the rank up to the maximal \textit{typical} rank $\Rtypmax$. The result generalizes to CPRNNs with linear activation function. 
Theorem~\ref{thm:rankcprnn} also provides $\Rmax$ as a bound above which the expressive power of CPRNNs saturates. Whether the inclusion is strict or not in the gap between $\Rtypmax$ and $\Rmax$  for CPBIRNNs remains an open question.

\begin{theorem} \label{thm:rankcprnn}
The following hold for any $n$ and $d$:
\begin{itemize}[leftmargin=*]
\vspace{-0.2cm}
    \item $\hCPRNN{R,n} \subseteq \hCPRNN{R+1,n}$ for any $R$.
    \item $\hCPRNN{R, n} = \hCPRNN{R+1, n} $ for any $R\geq R_{max}$.
\end{itemize}
Moreover, assuming  $n\leq d$:  
\begin{itemize}[leftmargin=*]
\vspace{-0.2cm}
    \item $\hCPBIRNN{R,n} \subsetneq \hCPBIRNN{R+1,n}$ for any $ R < \Rtypmax$ and any real analytic invertible activation function.
    \item $\hCPRNN{R,n} \subsetneq \hCPRNN{R+1,n}$ for a linear activation function and any $ R < \Rtypmax$.
\end{itemize}
\end{theorem}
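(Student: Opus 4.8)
These are just the informal argument stated right before the theorem, and the plan is to write it out carefully. For $\hCPRNN{R,n}\subseteq\hCPRNN{R+1,n}$, keep the rank-$R$ weights and append a zero column to each of $\Ab,\Bb,\Cb$, so the extra rank-one term is null and the recursion for $\vec{h}^t$ is unchanged. For the equality above $\Rmax$, use $\cprank{\T}\le\Rmax$ for every $\T\in\Rbb^{n\times d\times n}$, so ranks $R$ and $R+1$ already realize the same set of second-order tensors, namely all of $\Rbb^{n\times d\times n}$. The same padding also gives $\hCPBIRNN{R,n}\subseteq\hCPBIRNN{R+1,n}$, so for the two strict items it is enough to produce a single function in the larger class that lies outside the smaller one.

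\textbf{Items 3 and 4: recovering the second-order tensor from the function.} Fix $R<\Rtypmax$. The plan is to pick a tensor $\T$ with $\cprank{\T}$ exactly $R+1$ that is also generic enough for the probing below; this is possible precisely because $R+1\le\Rtypmax$. Let $\mathcal{A}$ be the CPBIRNN with this second-order tensor and a generic initial state $\vec{h}^0$ --- for the linear CPRNN case take instead $\vec{h}^0=\vec{0}$ and a first-order matrix $\Ub$ of full rank $n$ (possible since $n\le d$), the other first-order terms arbitrary. Assume for contradiction that some rank-$R$ model (a CPBIRNN, resp.\ a linear CPRNN) with second-order tensor $\T'$, so $\cprank{\T'}\le R$, computes the same input-to-hidden-states function. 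First I would probe with length-one sequences: in the bilinear case $\vec{x}^1\mapsto\vec{h}^1=\sigma((\T\times_1\vec{h}^0)^\top\vec{x}^1)$, so invertibility of $\sigma$ plus equality of the functions forces $\T'\times_1\tilde{\vec{h}}^0=\T\times_1\vec{h}^0=:\M$; in the linear case $\vec{h}^1=\Ub\vec{x}^1+\bb$, and equality forces the same linear coefficient $\Ub$. Put $W=\Im(\M^\top)$, resp.\ $W=\Im(\Ub)$. The decisive point is that $\sigma(W)$, resp.\ $W$, spans $\Rbb^n$: in the linear case because $\Ub$ has rank $n$; in the bilinear case because $W\ne\{0\}$ and, for a non-affine real-analytic activation applied coordinate-wise to a generic subspace, $\sigma(W)$ is contained in no proper subspace of $\Rbb^n$. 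Then I would probe with length-two sequences: this yields $\sigma^{-1}(\vec{h}^2)=\T\times_1\vec{u}\times_2\vec{x}^2$ (resp.\ the part of $\vec{h}^2$ bilinear in $(\vec{x}^1,\vec{x}^2)$ equals $\T\times_1(\Ub\vec{x}^1)\times_2\vec{x}^2$) for $\vec{u}$ ranging over a spanning subset of $\Rbb^n$ and $\vec{x}^2$ over $\Rbb^d$; the analogous identity for $\T'$ together with multilinearity then forces $\T=\T'$, contradicting $\cprank{\T}=R+1>R\ge\cprank{\T'}$.

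\textbf{What I expect to be the main obstacle.} Everything hinges on the genericity bookkeeping hidden in ``generic $\T$'' and ``generic $W$'': I must choose $\T$ of CP rank \emph{exactly} $R+1$ while staying off the measure-zero locus on which no initial state makes $\sigma(\Im((\T\times_1\vec{h}^0)^\top))$ span $\Rbb^n$. The plan is to check that this locus is a proper algebraic subset, that the CP-rank-$(R+1)$ tensors are not contained in it (they are dense in the variety of CP rank $\le R+1$, and of positive Lebesgue measure when $R+1$ is a typical rank), and that $n\le d$ makes $\Im((\T\times_1\vec{h}^0)^\top)$ large enough to begin with; the bilinear step also needs the elementary fact that $x\mapsto f(a_1x),\dots,f(a_kx)$ are linearly independent for distinct nonzero $a_i$ whenever $f$ is real-analytic and non-affine. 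A smaller but genuine difficulty is the bilinear case with a truly linear activation and small rank ($R+1<n$), where $\sigma(W)=W$ may fail to span: there the plan is to show instead that a generic rank-$(R+1)$ tensor with $R+1\ge2$ admits no proper subspace invariant under the reachable dynamics, so probing with longer sequences still reconstructs all of $\T$, while $R=0$ is handled directly (a nonzero map versus the zero map). Finally, the threshold $R<\Rtypmax$ is sharp for this approach: once $R+1>\Rtypmax$, a generic sum of $R+1$ rank-one tensors already has rank at most $\Rtypmax$, so a tensor of the needed rank is no longer available generically --- which is exactly why strictness in the range $\Rtypmax\le R<\Rmax$ is left open for CPBIRNNs.
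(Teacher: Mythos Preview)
Your Items 1 and 2 match the paper exactly, and your Item 4 is correct and close in spirit to the paper's argument (the paper also isolates the bilinear part of $\vec h^2$; your trick of setting $\vec h^0=\vec 0$ and taking $\Ub$ of full row rank is a clean way to force the first hidden states to range over all of $\Rbb^n$, whereas the paper instead exploits the freedom in $\Ub$ to make $(\Ub^\top+\B\,\diag(\A^\top\vec h^0)\,\C^\top)$ left-invertible).

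For Item 3 your route genuinely diverges from the paper. You aim to \emph{reconstruct} the second-order tensor from the input-to-hidden-state map (and conclude $\T=\T'$), whereas the paper never tries to recover $\T$: it splits into $R<n$ and $R\geq n$ and uses an \emph{invariant} in each case. For $R\geq n$ the paper builds the tensor $\St^h_{ijk}=[a_2(\vec e_i,\vec e_j)]_k=\CP{\sigma(\B\,\diag(\A^\top\vec h^0)\,\C^\top)\A,\B,\C}$ and bounds $\cprank{\St^h}\leq R$, then (via a short measure-zero/typical-rank lemma) exhibits rank-$(R{+}1)$ parameters making $\cprank{\St^{\tilde h}}=R{+}1$. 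Your reconstruction argument in this regime is essentially equivalent and needs the same existence lemma (your ``genericity bookkeeping''), so here the two approaches buy the same thing.

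Where your plan has a real gap is Item 3 with $R+1<n$ and a \emph{linear} (affine) activation. Your fallback --- ``a generic rank-$(R{+}1)$ tensor admits no proper subspace invariant under the reachable dynamics, so probing with longer sequences still reconstructs all of $\T$'' --- is not correct as stated. For a linear CPBIRNN with $\T=\CP{\A,\B,\C}$, every hidden state $\vec h^t$ with $t\geq 1$ lies in $\mathrm{span}\{\vec c_1,\dots,\vec c_{R+1}\}$, so the span of \emph{all} reachable states is contained in $\mathrm{span}\{\vec h^0,\vec c_1,\dots,\vec c_{R+1}\}$, of dimension at most $R+2<n$ whenever $R+1<n-1$. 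Hence there \emph{is} a proper invariant subspace, and the equality of functions only pins down $\T\times_1\vec u$ for $\vec u$ in that subspace --- you cannot force $\T=\T'$. (Your spanning-by-$\sigma(W)$ trick saves you only when $\sigma$ is non-affine, but ``real analytic invertible'' includes the identity.) The paper sidesteps this entirely: for $R<n$ it simply observes that $\dim(a_1(\Rbb^d))\leq R$ for any rank-$R$ CPBIRNN while a rank-$(R{+}1)$ model can achieve $\dim(a_1(\Rbb^d))=R{+}1$, and since any invertible $\sigma$ is a homeomorphism the manifold dimension of $\{\vec h^1\}$ separates the two classes. Replacing your invariant-subspace plan with this one-line dimension argument closes the gap.
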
 

Note that the first two points also hold for CPBIRNNs~(since they are CPRNNs restricted to second-order interactions).
We conjecture that the third and fourth points of this theorem generalize to any common activation function~(hyperbolic tangent, ReLU and linear). For the strict inclusions (third and fourth points), the assumption $n\leq d$ is made in order to preserve the emphasis of the results and the proofs on the interplay between rank and hidden size, as opposed to a limitation coming from the input size. 

\begin{figure}[h]
\includegraphics[scale=0.21]{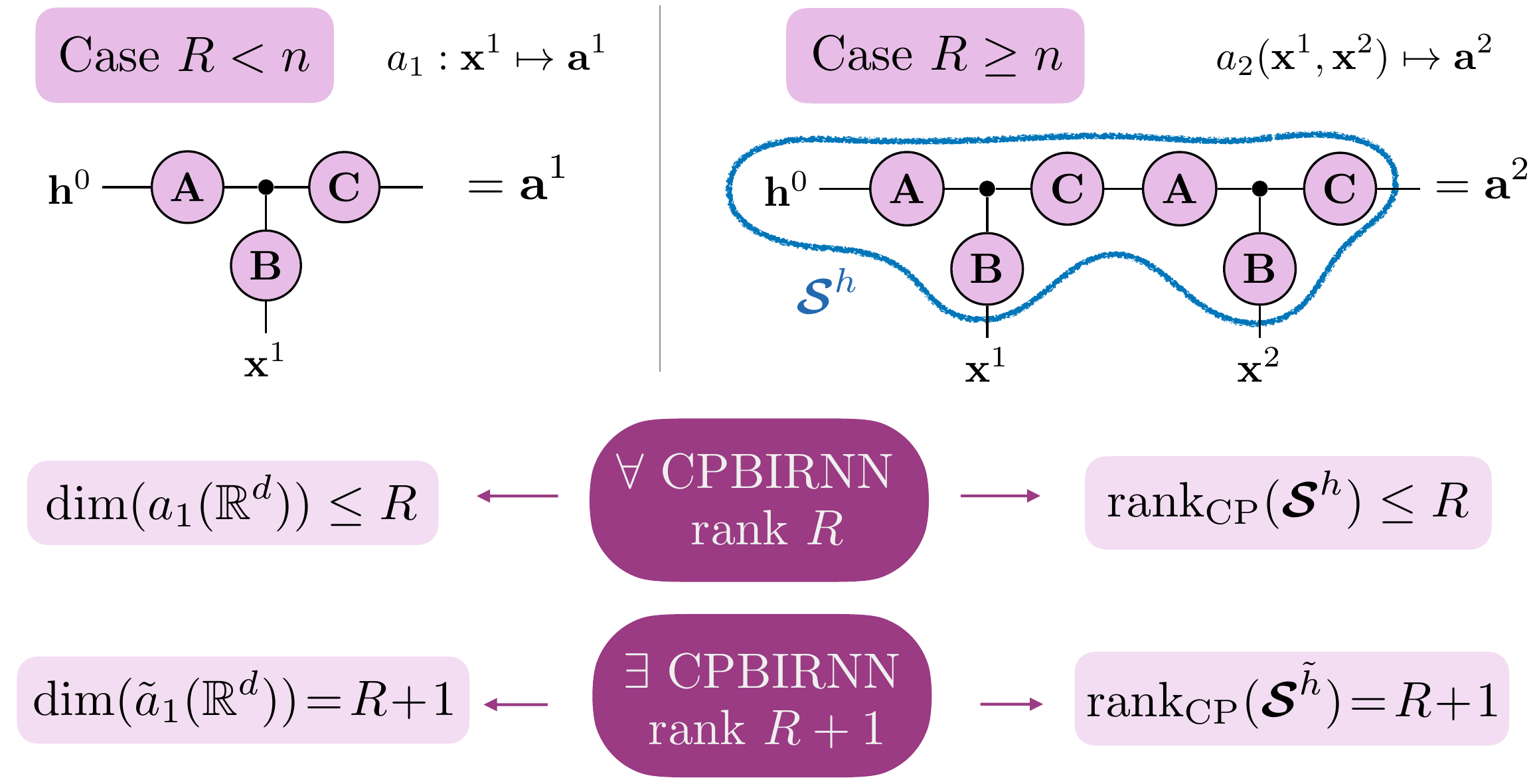}
\vspace{-0.5cm}
\caption{Elements of proof for strict inclusion of CPBIRNNs in Theorem~\ref{thm:rankcprnn}. Details of the proof can be found in the Appendix. }\label{fig:sketch_proof}
\vspace{-0.3cm}
\end{figure}

\paragraph{Sketch of proof}
The method to prove inclusions and saturation consists in finding explicit  parameterizations such that there is equality between the latent vectors $\vec h^t$'s, as outlined in the beginning of this section. 

To show strict inclusion, that is $\hCPBIRNN{R+1, n} \not \subset \hCPBIRNN{R, n}$, two cases are considered: $R<n$ and $R \geq n$. Figure~\ref{fig:sketch_proof} illustrated the key elements of these proofs. The idea behind the proof when $R<n$ is to consider the linear mapping of the first pre-activation vector $a_1:\vec{x}^1 \mapsto \vec a^1 \in \R^n$. For CPBIRNNs of rank $R$ the dimension of the image of $a_1$ is upper bounded by $R$. 
At the same time, we show that there exists a $\tilde h$ computed by a CPBIRNN of rank $R+1$ for which the pre-activation map $\tilde{a}_1$ has an image of dimension equal to $R+1$. This implies that the dimension of the manifold formed by the hidden vectors $\tilde{\vec h}^1= \sigma(\tilde{\vec a}^1)$ of this CPBIRNNs is $R+1$ (for invertible $\sigma$). This cannot be the case of any CPBIRNN of rank $R$, as this dimension is upper bounded by $R$, which concludes the case $R<n$.

In the case $R \geq n$, the rank does not act as a bottleneck on the dimension of the space of first hidden states.Therefore, we turn to the computation of the second hidden state $\vec h^2$, looking at the mapping of the pre-activation vector $a_2(\vec x^1,\vec x^2 ) \mapsto \vec a^2$. We consider the tensor $\St^h\in\R^{d\times d\times n}$ defined by $\St^h_{ijk} = [a_2(\vec{e}_i,\vec{e}_j )]_k$, where $\vec e_i$ denotes the $i$th vector of the canonical basis of $\R^d$. Intuitively, $\St^h$ gathers all the second hidden states obtained by applying the CPBIRNN to length 2 sequences of one-hot encodings and, crucially, is a witness to the low CP rank structure of the model.  

More precisely, we show that (i) the CP rank of this tensor is upper bounded by the rank of the CPBIRNN and (ii) there exists a parameterization such that this limit can be reached when the rank is smaller or equal to $\Rtypmax$. Therefore, for $R<\Rtypmax$, we conclude that there exists a function computed by a CPBIRNN of rank $R+1$ whose tensor $\St^h$ has CP rank $R+1$, and thus cannot be computed by a CPBIRNN of rank $R$.

For CPRNNs, we again look at the computation of the second hidden state vector, $h_2:(\vec x^1,\vec x^2 ) \mapsto \vec h^2$. This time however, because the activation function is linear, $h_2$ can be decomposed in four terms: $h_2(\vec x^1,\vec x^2 ) = \alpha(\vec x^1, \vec x^2 ) + \beta(\vec x^1) + \gamma(\vec x^2 ) + \delta$ where $\alpha$ is a bilinear map containing only second-order terms. This decomposition is such that given two functions $h$ and $\hat{h}$, if $\alpha \neq \hat{\alpha}$ then $h\neq \hat{h}$. Therefore, we define the tensor $\St^{\alpha}_{ijk} = [\alpha(\vec{e}_i,\vec{e}_j )]_k$ and apply a similar argument as for CPBIRNNs.
The complete proof can be found in Appendix. 

\paragraph{Explicit bounds}
Note that unlike matrices, for higher order tensors $\Rmax^{d_1, d_2 , d_3}$  is not given by the smallest dimension~$\min\{d_1,d_2,d_3\}$; it can even be greater than the largest dimension. For a third-order tensor, a loose upper-bound is given by $R^{d_1, d_2 , d_3}_{\max} \leq \min\{d_1d_2, d_1d_3, d_2d_3\}$, but the exact value is in general unknown and bounding it is a non-trivial problem~(see e.g. \citep{howell1978global}). In the context of Theorem \ref{thm:rankcprnn}, this upper bound implies that  the saturation result is valid for $R\geq \min\{nd,n^2\}$. Similarly, while the notion of \textit{typical} rank for matrices leads to a unique value that  coincides with $\Rmax$, for higher-order tensors there can be a set of typical ranks that does not necessarily include $\Rmax$. The characterization of \textit{typical} ranks is also a complex problem, but there is a known lower bound on the smallest typical rank which guarantees that the last two bullets of Theorem \ref{thm:rankcprnn} hold when $R<\frac{n^2d}{(2n+d-2)}$ ~\citep{strassen1983rank, brockett1976generic, comon2009generic}.

\subsection{Expressive power of CPRNNs: interplay between rank and hidden size}
We have established that the rank plays a key role in the expressive power of CPRNNs. The following natural question is to what extent increasing hidden size impacts the models expressivity for a fixed rank. 
Comparing the classes of functions $\hCPBIRNN{R,n}$ and $\hCPBIRNN{R,n+1}$ is, however, an ill-defined problem because their output space is different. 
It nonetheless makes sense to compare the expressiveness of these classes after being composed with another family of simple functions bringing them to a common space. We naturally consider linear maps.
Let $\linearmaps{n,q}$ denote the space of linear maps from $\Rbb^n$ to $\Rbb^q$ with $q \geq 1$. 
To formally study the impact of hidden size in CPRNNs, we compare the composition of functions : $\linearmaps{n,q} \circ \hCPBIRNN{R,n}$ which can be thought of as an output layer.
The following theorem shows that the rank of the CP decomposition acts as a bottleneck to the expressive power of CPRNNs: for hidden sizes below the CPRNN rank, the expressive power of CPRNNs improves with its hidden size; however, once the hidden size gets larger than the rank, the expressivity of the model saturates.

\begin{theorem} 
\label{thm:hiddencprnn}
The following hold for any $d$ and $n$: 
\begin{itemize}[leftmargin=*]
\vspace{-0.2cm}
    \item $\linearmaps{n,q} \circ \hCPBIRNN{R,n} \subseteq \linearmaps{n+1,q} \circ \hCPBIRNN{R,n+1}$ for any $R$ and $n$.
    \item $\linearmaps{n,q} \circ \hCPBIRNN{R,n} = \linearmaps{n+1,q} \circ \hCPBIRNN{R,n+1}$ for any $\ n \geq R$ and linear activation function.
 \end{itemize}
 Moreover, assuming  $n\leq d$:
\begin{itemize}[leftmargin=*]
\vspace{-0.2cm}
    \item $\linearmaps{n,q} \circ \hCPBIRNN{R,n} \subsetneq \linearmaps{n+1,q} \circ \hCPBIRNN{R,n+1}$ for any $n<R$ and any invertible activation function satisfying $\sigma(0)=0$.
\end{itemize}
\end{theorem}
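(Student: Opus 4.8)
The plan is to treat the three bullets by separate mechanisms, in each case reducing the comparison of function classes to an explicit statement about parameterisations. The first bullet is a zero-padding argument: given an $n$-hidden rank-$R$ CPBIRNN with factors $\Ab,\Bb,\Cb$, initial state $\vec h^0$, and output map $L\in\linearmaps{n,q}$, I would append a zero row to $\Ab$ and to $\Cb$ (keeping $\Bb$) and set the new initial state to $(\vec h^0;0)$. Because the last row of the new $\Cb$ vanishes, the $(n{+}1)$-th pre-activation is identically $0$ at every step, so the extra hidden coordinate equals the constant $\sigma(0)$; because the last row of the new $\Ab$ vanishes, this coordinate never feeds back, so the first $n$ coordinates reproduce $\vec h^t$ exactly. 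Composing with $L'=L\circ(\text{projection onto the first }n\text{ coordinates})\in\linearmaps{n+1,q}$ gives $L'\circ h'=L\circ h$. No assumption on $\sigma$ is needed here, and the same construction gives the other inclusions.

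For the equality ($n\ge R$, linear $\sigma$), only the reverse inclusion is at stake. Rescaling $\Cb$ we may take $\sigma=\mathrm{id}$, so an $(n{+}1)$-hidden CPBIRNN satisfies $\vec h^t=\Cb\,\diag(\Bb^\top\vec x^t)\,\Ab^\top\vec h^{t-1}$, and for every $t\ge 1$ one has $\vec h^t\in V:=\Im(\Cb)$ with $\dim V\le R\le n$; hence the outputs $L\vec h^t$ ($t\ge 1$) depend only on $\vec h^t\in V$. When $\dim V<n$ (in particular whenever $n>R$) I would fix a linear injection $\phi\colon V\hookrightarrow\Rbb^n$, extend it to $\Phi\colon\Rbb^{n+1}\to\Rbb^n$, and define the $n$-hidden model by $\tilde{\Cb}=\Phi\Cb$, $\tilde{\Bb}=\Bb$, and $\tilde{\Ab}^\top$ agreeing with $\Ab^\top(\phi|_V)^{-1}$ on $\phi(V)$ while sending one free direction to $\Ab^\top\vec h^0$; then the $n$-hidden state equals $\phi(\vec h^t)$ for $t\ge1$, and an output map $L'$ with $L'\circ\phi=L|_V$ recovers the outputs. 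The genuinely delicate point is the initial state: it lies outside $V$ and enters only through $\Ab^\top\vec h^0$, and matching it needs a free direction, which disappears exactly when $\dim V=n$, i.e.\ at the boundary $n=R$ with $\Cb$ of full column rank. There my approach would be to take as the $n$-hidden state the pair $(L\vec h^t,\ \Ab^\top\Cb\,\vec q^t)$ (writing $\vec h^t=\Cb\vec q^t$), which is a linear image of $\vec q^t\in\Rbb^n$, hence lies in a subspace of dimension $\le n$, and whose step map is exactly of the bilinear–diagonal CPBIRNN form. I expect making this $n=R$ boundary fully rigorous to be the main obstacle of the theorem.

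For the strict inclusion ($n<R$), the inclusion is the first bullet and I must exhibit a function in $\linearmaps{n+1,q}\circ\hCPBIRNN{R,n+1}$ outside $\linearmaps{n,q}\circ\hCPBIRNN{R,n}$; the mechanism is a capacity bound for the smaller class. When $d\ge 2$ I would use, as in the proof of Theorem~\ref{thm:rankcprnn}, the tensor $\St^h\in\Rbb^{d\times d\times n}$ recording the second pre-activation on one-hot inputs $(\vec e_i,\vec e_j)$: a direct computation valid for \emph{any} $\sigma$ gives $\St^h=\sum_{r=1}^R\vec w_r\outprod\Bb_{:,r}\outprod\Cb_{:,r}$ with $\vec w_r=\sigma(\Nb)^\top\Ab_{:,r}$ ($\Nb$ the matrix of the first pre-activation map), so the mode-$3$ unfolding of $\St^h$ is $\Cb\Wb^\top$, of rank $\le\rank\Cb\le n$, and this survives composition with a readout acting on the third mode. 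Conversely, since $R\ge n+1$ and $d^2\ge n+1$ (which holds whenever $d\ge2$), one can choose an $(n{+}1)$-hidden CPBIRNN with $\rank\Cb=n+1$ whose matrices $\{\vec w_r\outprod\Bb_{:,r}\}_r$ span $n+1$ independent directions, so that the mode-$3$ unfolding of $\St^h$ has rank $n+1$; with an injective $L$ this rank is preserved and cannot be matched by any $n$-hidden model. The residual case $d=n=1$ I would handle directly: a $1$-hidden CPBIRNN composed with a readout outputs at every step a vector in a fixed line $\Rbb\vec\ell$ (each $\vec h^t$ is a scalar multiple of $\vec h^1$), whereas a $2$-hidden model with $R\ge2$ can be tuned---e.g.\ $\Cb=\I_2$ with $\vec h^0$ cyclic for the recurrence matrix---so that $L\vec h^1$ and $L\vec h^2$ point in independent directions; here invertibility of $\sigma$ and $\sigma(0)=0$ are what make these line statements hold through the origin and keep the bilinear–diagonal structure exact. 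The explicit parameter choices in both constructions are routine verifications that would go to the appendix; the conceptual content is that a linear output layer cannot lift the capacity of a hidden-size-$n$ model above $n$, while a hidden-size-$(n{+}1)$ model reaches $n+1$.
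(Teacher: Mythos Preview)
Your first two bullets match the paper's argument closely. The zero–padding inclusion is identical. For saturation you take the same route as the paper (factor the $(n{+}1)$–hidden model through an $n$–dimensional space using $\Im\Cb$), and you are right that the initial state is the delicate point; the paper handles it by factoring $\tilde\Ab=\Pb\Ab$, $\tilde\Cb=\Qb\Cb$ and absorbing $\Pb^\top$ into $\vec h^0$ and $\Qb$ into the readout, which sidesteps your $n=R$ boundary worry in one stroke (modulo choosing $\Pb,\Qb$ compatibly). Your pair $(L\vec h^t,\Ab^\top\Cb\vec q^t)$ is a heavier device than needed.

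The third bullet has a real gap. Your mode–$3$ rank argument for strict inclusion hinges on taking $L$ \emph{injective} so that the rank $n{+}1$ of the $(n{+}1)$–hidden output tensor survives the readout. That forces $q\ge n{+}1$. The theorem, however, must hold for every $q\ge 1$; in particular for $q=1$ your invariant (the mode–$3$ rank of the observed output tensor) is always $\le 1$ for both classes and cannot separate them. The same problem hits your $d=n=1$ residual case: ``independent directions of $L\vec h^1,L\vec h^2$'' is vacuous when $q=1$. Proving the result for large $q$ does not imply it for $q=1$, because matching each scalar output coordinate may require a \emph{different} $n$–hidden model. The paper goes in the opposite direction: it reduces to $q=1$ and proves a general lemma (Lemma~\ref{lemma:diff.ima.dims.compose.linear.form}) saying that if two maps $\phi,\psi$ into a common space have linear images with $\dim\phi(\Xcal)>\dim\psi(\Xcal)$, then even after a homeomorphism $\sigma$ and projection to linear \emph{forms}, $\linearmaps{}(V)\circ\sigma\circ\phi\not\subset\linearmaps{}(V)\circ\sigma\circ\psi$. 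Applied to the first pre-activation maps $a_1$ (image dimension $\le n$) and $\tilde a_1$ (image dimension $n{+}1$ for a suitable $(n{+}1)$–hidden model), this gives separation already at $q=1$, and larger $q$ follows by padding the output. Your second–hidden–state tensor is the right tool in Theorem~\ref{thm:rankcprnn} where there is no readout; here the readout collapses precisely the quantity you are tracking, and you need a finer, dimension/manifold–based invariant instead.
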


We conjecture that the second point (saturation) and  the third point (strict inclusion) of the theorem also hold for any real analytic activation function. 
Note that this theorem is intentionally stated only for CPBIRNNs. 
In CPRNNs the expressivity would also be impacted by the difference in hidden size of the first-order and bias terms, while our focus is to compare the second-order terms.
\begin{figure}[h]
\includegraphics[scale=0.28]{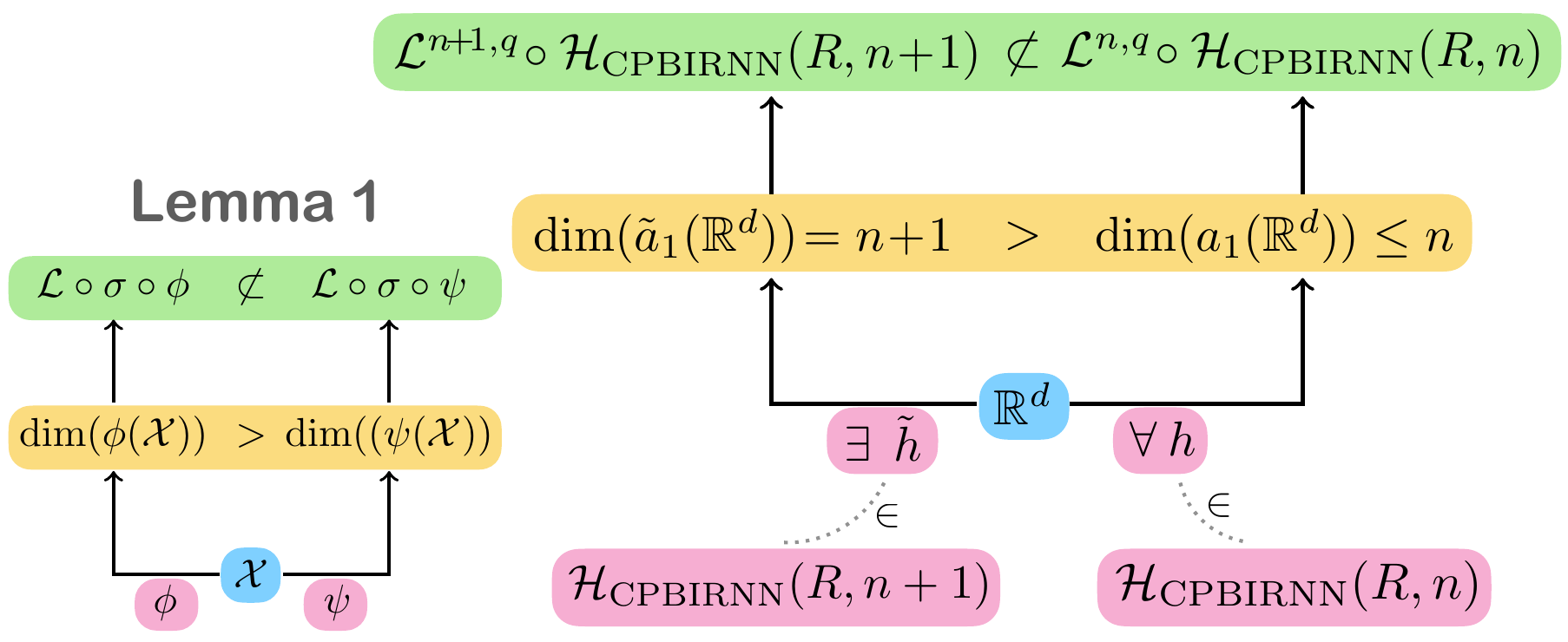}
\vspace{-0.7cm}
\caption{Schematic representation of Lemma \ref{lemma:diff.ima.dims.compose.linear.form} and its application in the context of  Theorem~\ref{thm:hiddencprnn}.}\label{fig:scheme}
\end{figure}
\paragraph{Sketch of proof}
The main technical challenge of the proof is to show the strict inclusion~(details for inclusion and saturation can be found in appendix).
We want to show that, when $n<R$, $\linearmaps{n,q} \circ \hCPBIRNN{R,n}$ is strictly included in $\linearmaps{n+1,q} \circ \hCPBIRNN{R,n+1}$. The key idea is to leverage the following two facts: 
\begin{itemize}
    \item the dimension of the linear space formed by all first hidden states pre-activation computed by a function $h\in\hCPBIRNN{R,n}$ is bounded by $n$, i.e. $\text{dim}(a_1(\Rbb^d))\leq n$,
    \item there exists a function $\tilde{h}\in\hCPBIRNN{R,n+1}$ for which the corresponding linear space is of dimension $n+1$, i.e. $\text{dim}(\tilde{a}_1(\Rbb^d))=n+1$.
\end{itemize}
While these two facts are obvious~(when $n<R$), it is not trivial to show that they imply the existence of a function $\ell \circ h\in \linearmaps{n+1,q} \circ \hCPBIRNN{R,n+1}$ that cannot be computed by any function in $\linearmaps{n,q} \circ \hCPBIRNN{R,n}$, since~(in some very loose sense) the strictly greater intrinsic linear dimension of $\tilde{h}$ is collapsed into a $q$-dimensional space through an activation function and a linear projection (note that if $q>n$ and the activation is linear, the result is somehow trivial). 

This difficulty is addressed by the following lemma showing that as long as the activation $\sigma$ is a homeomorphism, the linear bottleneck on the space of pre-activations implies a strict inclusion of the function classes after non-linear transformation through $\sigma$ and linear projection. This result is schematically illustrated in Figure~\ref{fig:scheme} as well as how it is applied to demonstrate the strict inclusion of Theorem~\ref{thm:hiddencprnn}.
\begin{lemma}\label{lemma:diff.ima.dims.compose.linear.form}
  Let $V$ be a vector space of dimension $d$, $\phi, \psi : \mathcal{X} \to V$ two maps whose images $\phi(\mathcal{X})$ and $\psi(\mathcal{X})$ are subspaces of $V$ and $\sigma $ an homeomorphism. Lastly, let $\linearmaps{}(V)$ denote the set of all linear forms on $V$~(i.e. $\linearmaps{}(V)$ is the dual space $V^*$). If $\dim(\phi(\mathcal{X})) > \dim(\psi(\mathcal{X}))$, then $\linearmaps{}(V) \circ \sigma \circ \phi \not \subset \linearmaps{}(V) \circ \sigma \circ\psi$.
\end{lemma}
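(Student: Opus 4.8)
The plan is to prove the contrapositive form: assume $\linearmaps{}(V) \circ \sigma \circ \frec \subseteq \linearmaps{}(V) \circ \sigma \circ \fout$ and deduce $\dim(\frec(\mathcal{X})) \le \dim(\fout(\mathcal{X}))$, contradicting the hypothesis. The key first step is to collapse the $\ell$-indexed family of witnesses into a single linear map. For every $\ell \in \linearmaps{}(V)$ the inclusion hands us some $\ell' \in \linearmaps{}(V)$ with $\ell \circ \sigma \circ \frec = \ell' \circ \sigma \circ \fout$ on $\mathcal{X}$. Let $N \subseteq V$ be the subspace spanned by all vectors $\sum_i c_i\, \sigma(\frec(x_i))$ arising from finite families with $\sum_i c_i\, \sigma(\fout(x_i)) = 0$. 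A direct verification shows that, for a fixed $\ell$, the equation $\ell \circ \sigma \circ \frec = \ell' \circ \sigma \circ \fout$ has a solution $\ell'$ iff $\ell$ annihilates $N$: the forward direction is immediate by applying $\ell'$ to a relation $\sum_i c_i\, \sigma(\fout(x_i)) = 0$, and conversely, if $\ell|_N = 0$ then the rule $\sigma(\fout(x)) \mapsto \ell(\sigma(\frec(x)))$ is well defined and linear on $\mathrm{span}\{\sigma(\fout(x)) : x \in \mathcal{X}\}$ and extends to the desired $\ell'$. Hence the assumed inclusion forces $N^\perp = \linearmaps{}(V)$, i.e. $N = \{0\}$.

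Now $N = \{0\}$ is precisely the statement that $\sigma(\fout(x)) \mapsto \sigma(\frec(x))$ respects every linear relation among the $\sigma(\fout(x))$; it therefore extends to a linear map on $\mathrm{span}\{\sigma(\fout(x))\}$ and then, arbitrarily, to a linear map $S : V \to V$ with $\sigma \circ \frec = S \circ \sigma \circ \fout$ on all of $\mathcal{X}$. Since $\sigma$ is a homeomorphism (injectivity is all that is needed here), we may apply $\sigma^{-1}$ and obtain $\frec = \sigma^{-1} \circ S \circ \sigma \circ \fout$; in particular $\frec$ is constant on the fibres of $\fout$, because $\fout(x) = \fout(x')$ implies $\sigma(\frec(x)) = S(\sigma(\fout(x))) = S(\sigma(\fout(x'))) = \sigma(\frec(x'))$.

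The last step uses that $\frec$ and $\fout$ are linear maps, as in the intended application where they are the first pre-activation maps $a_1$. The fibres of a linear $\fout$ are the cosets of $\Ker \fout$, so $\frec(x+k) = \frec(x)$ for all $k \in \Ker \fout$, and linearity of $\frec$ gives $\frec(k) = 0$; hence $\Ker \fout \subseteq \Ker \frec$ and $\frec$ factors as $\frec = T \circ \fout$ for some linear $T$. Therefore $\frec(\mathcal{X}) = T(\fout(\mathcal{X}))$, so $\dim(\frec(\mathcal{X})) \le \dim(\fout(\mathcal{X}))$, contradicting $\dim(\frec(\mathcal{X})) > \dim(\fout(\mathcal{X}))$ and proving the lemma.

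I expect the crux to be the first step: recognising that solvability for every single $\ell$ is strong enough to manufacture one linear map $S$ intertwining $\sigma \circ \frec$ and $\sigma \circ \fout$; once this intertwiner exists, only elementary linear algebra and injectivity of $\sigma$ remain. One should be mildly careful that the final step genuinely relies on $\frec,\fout$ being affine — for arbitrary maps with linear images one can arrange a homeomorphism $\sigma$ under which the inclusion holds despite a strict drop in dimension — so it is the linearity of the pre-activation maps, not merely $\sigma$ being a homeomorphism, that actually closes the argument.
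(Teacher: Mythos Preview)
Your first step---showing that the assumed inclusion manufactures a single linear map $S$ with $\sigma\circ\phi = S\circ\sigma\circ\psi$ on $\mathcal X$---is correct, and it lands you exactly where the paper's proof lands: the paper picks a basis $l_1,\dots,l_d$ of $V^*$, obtains matching $\tilde l_i$'s from the inclusion, and packages them into a linear map $\tilde\nu$ (your $S$, with the paper's $\nu$ equal to the identity), yielding $\nu\circ\sigma\circ\phi=\tilde\nu\circ\sigma\circ\psi$. Your route through the subspace $N$ and the paper's basis-of-$V^*$ route are equivalent constructions of the same intertwiner.

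Where you diverge is the endgame. The paper finishes with a dimension argument: since $\nu\circ\sigma$ is a homeomorphism, the image $\nu(\sigma(\phi(\mathcal X)))$ has topological (manifold) dimension $\dim\phi(\mathcal X)$, whereas the linear $\tilde\nu$ ``cannot increase the dimension of the manifold'' $\sigma(\psi(\mathcal X))$, forcing $\dim\phi(\mathcal X)\le\dim\psi(\mathcal X)$. You instead add the hypothesis that $\phi,\psi$ are linear and close with elementary kernel containment.

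Your caution in that last paragraph is in fact justified: for a \emph{bare} homeomorphism $\sigma$ the paper's step that a linear map cannot raise the dimension of $\sigma(\psi(\mathcal X))$ can fail. Take $V=\Rbb^3$, $\mathcal X=\Rbb$, $\psi(x)=(x,0,0)$, a continuous surjection $\gamma:\Rbb\to\Rbb^2$, and the homeomorphism $\sigma(x,y,z)=(\gamma_1(x)+y,\ \gamma_2(x)+z,\ x)$. With $S(a,b,c)=(a,b,0)$ and $\phi:=\sigma^{-1}\circ S\circ\sigma\circ\psi$ one computes $\phi(\Rbb)=\{0\}\times\Rbb^2$, a $2$-dimensional subspace, yet $\linearmaps{}(V)\circ\sigma\circ\phi\subseteq\linearmaps{}(V)\circ\sigma\circ\psi$ holds by construction. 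So the lemma as literally stated admits counterexamples; the paper's dimension argument becomes sound once one strengthens ``homeomorphism'' to, say, a locally bi-Lipschitz map or a $C^1$ diffeomorphism (which component-wise activations like $\tanh$ certainly are), since then $\sigma(\psi(\mathcal X))$ keeps Hausdorff dimension $\dim\psi(\mathcal X)$ and the Lipschitz map $\tilde\nu$ cannot increase it. Your alternative fix---assuming $\phi,\psi$ linear---is equally valid and, as you note, is precisely the situation in the only place the lemma is invoked. In short: you have proved what the paper needs, by a genuinely different (and cleaner, purely algebraic) endgame, and you have correctly flagged that the lemma as written overreaches.
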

Note that this lemma could be applied more broadly to other neural network architectures.

\subsection{Comparison with RNNs, 2RNNs and MIRNNs}
First, it is trivial to check that CPRNNs are strictly more expressive than RNNs since both models share the same first-order terms. In fact, one can view RNNs as CPRNNs with rank $R=0$. Similarly, one can easily check that there is no inclusion relation~(one way or the other) between RNNs and CPBIRNNs since the latter only contain second-order terms while the former only captures first-order ones. 

Turning to the comparison with 2RNNs,  Theorem \ref{thm:rankcprnn} naturally implies that (i) 2RNNs are strictly more expressive than CPBIRNNs for ranks smaller than the maximal typical rank and (ii) the expressivity of CPRNNs saturates when they become as expressive as 2RNNs, that is at $\Rmax$, leading to the following  corollary.

\begin{corollary}\label{cor:2rnn}
For any $d$ and $n$,  we have:

\begin{itemize}[leftmargin=*]
\vspace{-0.2cm}
    \item $\hCPRNN{R, n} = \hRNN{n}$ for any $ R \geq R_{max}$ (for any activation function)
\end{itemize}
Moreover, assuming $n\leq d$:
\begin{itemize}[leftmargin=*]
\vspace{-0.2cm}

    \item $\hCPBIRNN{R, n} \subsetneq \hRNN{n}$ for any $ R < \Rtypmax$ and any real analytic invertible activation function.
\end{itemize}
\end{corollary}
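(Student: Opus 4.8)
The plan is to obtain both statements as short consequences of Theorem~\ref{thm:rankcprnn}, using the elementary fact that every CPRNN is a 2RNN whose recurrent tensor is $\CP{\Ab,\Bb,\Cb}$, and every CPBIRNN is a 2RNN with null first-order terms and recurrent tensor $\CP{\Ab,\Bb,\Cb}$; hence $\hCPRNN{R,n}\subseteq\hRNN{n}$ and $\hCPBIRNN{R,n}\subseteq\hRNN{n}$ for all $R$ and $n$, with no constraint on the activation function.

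For the first bullet I would prove the reverse inclusion $\hRNN{n}\subseteq\hCPRNN{R,n}$ whenever $R\geq\Rmax$. Given a 2RNN with recurrent tensor $\At\in\Rbb^{n\times d\times n}$, by definition of $\Rmax$ we have $\cprank{\At}\leq\Rmax\leq R$, so $\At$ admits an exact CP decomposition; padding its factor matrices with zero columns produces factors $\Ab\in\Rbb^{n\times R}$, $\Bb\in\Rbb^{d\times R}$, $\Cb\in\Rbb^{n\times R}$ with $\CP{\Ab,\Bb,\Cb}=\At$. Keeping the first-order weights unchanged yields a rank-$R$ CPRNN computing exactly the same hidden-state sequences, and this argument is insensitive to the choice of activation. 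Together with $\hCPRNN{R,n}\subseteq\hRNN{n}$ this gives the claimed equality; equivalently, one may invoke the saturation bullet of Theorem~\ref{thm:rankcprnn} at $R=\Rmax$ and then note that $\hCPRNN{\Rmax,n}=\hRNN{n}$ by the same padding argument.

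For the second bullet, under $n\leq d$ and a real analytic invertible activation, the strict-inclusion bullet of Theorem~\ref{thm:rankcprnn} provides, for every $R<\Rtypmax$, a function $\tilde h\in\hCPBIRNN{R+1,n}\setminus\hCPBIRNN{R,n}$ (concretely, a CPBIRNN of rank $R+1$ whose witness tensor $\St^h$ has CP rank $R+1$). Since $\hCPBIRNN{R+1,n}\subseteq\hRNN{n}$ unconditionally---the definition of $\hRNN{n}$ places no rank constraint on the recurrent tensor---this $\tilde h$ lies in $\hRNN{n}$ but not in $\hCPBIRNN{R,n}$, which, combined with $\hCPBIRNN{R,n}\subseteq\hRNN{n}$, yields the strict inclusion $\hCPBIRNN{R,n}\subsetneq\hRNN{n}$.

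There is essentially no genuine obstacle here: the mathematical content has already been established in Theorem~\ref{thm:rankcprnn}, and what remains is bookkeeping. The only points requiring a moment's care are checking that the witness used for the CPBIRNN strict inclusion is literally an element of the 2RNN class---which it is, being a CPBIRNN, hence a 2RNN with zero first-order terms---and, for the first bullet, that $\Rmax$ is finite so that the padding is well defined, which holds since $\Rmax\leq\min\{nd,n^2\}$ as recalled after Theorem~\ref{thm:rankcprnn}.
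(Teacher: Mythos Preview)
Your proposal is correct and follows precisely the paper's approach: the paper presents this corollary as an immediate consequence of Theorem~\ref{thm:rankcprnn} together with the trivial inclusions $\hCPRNN{R,n}\subseteq\hRNN{n}$ and $\hCPBIRNN{R,n}\subseteq\hRNN{n}$, without providing a separate proof in the appendix. Your careful bookkeeping (padding the CP factors for the equality, and observing that the rank-$(R+1)$ CPBIRNN witness from Theorem~\ref{thm:rankcprnn} is automatically a 2RNN) is exactly what the paper leaves implicit.
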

Note that the second point~(strict inclusion) also holds for CPRNNs with linear activation functions~(as a corollary of Theorem~\ref{thm:rankcprnn}) and the first point also holds for CPBIRNNs~(trivially). 

Another approach suggested to introduce second-order terms to RNNs while preserving a similar number of parameters is the MIRNN. Inspired by the gated mechanisms of LSTM and GRU, MIRNNs limit second-order interactions to terms that are linear in the component-wise/Hadamard product of the hidden state and input (see Def. \ref{def:mirnn}):$$ \diag(\boldsymbol{\alpha}) (\Vb \hb^{t-1} \odot \Ub \xb^{t}).$$
At the same time, the CP decomposition of the second-order term in CPRNN can also be expressed in terms of a Hadamard product:
$$\CP{\Ab,\Bb,\Cb} \times_1 \hb^{t-1} \times_2 \xb^{t}  = \C ( \A^\top \hb^{t-1} \odot \B^\top \xb^t). $$
This illustrates that MIRNNs are in fact CPRNNs of rank $n$ whose matrix $\C$ is constrained to be diagonal.
Consequently, $ \hMIRNN{n} \subseteq \hCPRNN{n,n}$.
The constraint on the matrix $\C$ to be diagonal means that each component of the second-order term of a MIRNN hidden state is computed by a rank one matrix while for a CPRNN it is rank $R$. This suggests that 
CPRNNs are strictly more expressive than MIRNNs for ranks greater than the hidden size. To prove it, it suffices to use the strict inclusion of Theorem~\ref{thm:rankcprnn} for linear CPRNNs. Indeed, in this case and when $R>n$ we have $\hMIRNN{n} \subseteq \hCPRNN{n,n} \subsetneq \hCPRNN{R,n}$.
When the rank is smaller than the hidden dimension, there is no clear inclusion relation~(in one way or the other) between the two model families. 
The following corollary~(whose proof can be found in appendix) formalizes this result.

\begin{corollary}\label{thm:mirnn}
Assuming $n\leq d$, for any $ R > n $,
\begin{itemize}[leftmargin=*]
\vspace{-0.2cm}
    \item $\hMIRNN{n}   \subseteq \hCPRNN{R,n}$ 
    \item $\hMIRNN{n}   \subsetneq \hCPRNN{R,n}$ for linear activation function
\end{itemize}
\end{corollary}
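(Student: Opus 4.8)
The plan is to chain the explicit embedding of MIRNNs into rank-$n$ CPRNNs (already sketched in the text preceding the statement) with the inclusion and strict-inclusion parts of Theorem~\ref{thm:rankcprnn}, so that the only genuinely new ingredient is a short parameter-bookkeeping argument.

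For the first bullet I would first establish $\hMIRNN{n}\subseteq\hCPRNN{n,n}$ by exhibiting, for an arbitrary MIRNN with gate vectors $\boldsymbol{\alpha},\boldsymbol{\beta_1},\boldsymbol{\beta_2}$, matrices $\U,\V$, bias $\bb$, initial state $\h^0$ and activation $\sigma$, a CPRNN of hidden size $n$ and rank $R=n$ computing the same hidden-state sequences. Concretely I would take $\A=\V^\top$, $\B=\U^\top$, $\C=\diag(\boldsymbol{\alpha})$, first-order matrices $\V'=\diag(\boldsymbol{\beta_1})\V$ and $\U'=\diag(\boldsymbol{\beta_2})\U$, and keep the same $\h^0$, $\bb$ and $\sigma$. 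Using the identity $\CP{\A,\B,\C}\times_1\h^{t-1}\times_2\x^t=\C(\A^\top\h^{t-1}\odot\B^\top\x^t)$ recalled before the statement, the resulting CPRNN pre-activation at each step is $\diag(\boldsymbol{\alpha})(\V\h^{t-1}\odot\U\x^t)+\diag(\boldsymbol{\beta_1})\V\h^{t-1}+\diag(\boldsymbol{\beta_2})\U\x^t+\bb$, which is exactly the MIRNN update; an induction on $t$ then gives equality of the two hidden-state maps, hence $\hMIRNN{n}\subseteq\hCPRNN{n,n}$. Applying the first bullet of Theorem~\ref{thm:rankcprnn} ($\hCPRNN{R,n}\subseteq\hCPRNN{R+1,n}$) repeatedly yields $\hCPRNN{n,n}\subseteq\hCPRNN{R,n}$ for every $R\geq n$, and composing the two inclusions gives $\hMIRNN{n}\subseteq\hCPRNN{R,n}$ for all $R>n$.

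For the second bullet (linear activation) the idea is to insert one strict step into this chain. Starting again from $\hMIRNN{n}\subseteq\hCPRNN{n,n}$, I would apply the fourth bullet of Theorem~\ref{thm:rankcprnn} with the choice $R=n$ to obtain $\hCPRNN{n,n}\subsetneq\hCPRNN{n+1,n}$, then the first bullet again for $\hCPRNN{n+1,n}\subseteq\hCPRNN{R,n}$ when $R\geq n+1$. Chaining, $\hMIRNN{n}\subseteq\hCPRNN{n,n}\subsetneq\hCPRNN{n+1,n}\subseteq\hCPRNN{R,n}$, so $\hMIRNN{n}\subsetneq\hCPRNN{R,n}$ for every $R>n$.

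The only delicate point — and where I expect the subtlety to lie — is invoking the strict step $\hCPRNN{n,n}\subsetneq\hCPRNN{n+1,n}$: this uses Theorem~\ref{thm:rankcprnn} with $R=n$, whose hypothesis $R<\Rtypmax$ then reads $n<\Rtypmax$. I would either record $n<\Rtypmax$ as a standing hypothesis or note that it is guaranteed in the stated regime $n\le d$ by the explicit lower bound $\tfrac{n^2 d}{2n+d-2}$ on the smallest typical rank (excluding only a few small corner cases). Everything else is routine, since the real difficulty — promoting a strict inclusion of tensor parameter sets to a strict inclusion of the corresponding recurrent function classes — has already been handled inside Theorem~\ref{thm:rankcprnn}.
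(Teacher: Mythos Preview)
Your proposal is correct and follows essentially the same route as the paper: embed an MIRNN into a rank-$n$ CPRNN via $\A=\V^\top$, $\B=\U^\top$, $\C=\diag(\boldsymbol{\alpha})$ (with the first-order terms absorbing $\boldsymbol{\beta_1},\boldsymbol{\beta_2}$), then chain with Theorem~\ref{thm:rankcprnn} to reach rank $R>n$, inserting the strict step $\hCPRNN{n,n}\subsetneq\hCPRNN{n+1,n}$ in the linear case. Your explicit flagging of the side condition $n<\Rtypmax$ (and how $n\le d$ plus the lower bound $\tfrac{n^2d}{2n+d-2}$ secures it away from degenerate cases) is in fact more careful than the paper's own write-up, which invokes the strict-inclusion bullet without spelling this out.
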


\begin{figure}[t]
\includegraphics[scale=1]{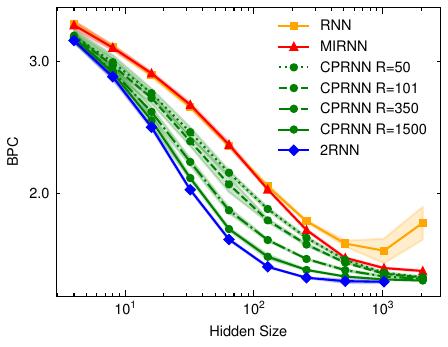}
\caption{BPC as a function of hidden size of RNN, MIRNN, 2RNN and CPRNN with ranks 50, 101, 350 and 1500. As rank increases, CPRNNs approaches 2RNNs having greater capacity for fixed hidden sizes.}\label{fig:bpc_vs_hidden}
\end{figure}

\section{Experiments}
We investigate to which extent our theoretical results hold empirically in the context of character level sequence modelling subject to gradient based training. 
We perform experiments on the Penn Treebank dataset~\citep{marcus1993building} measuring bits-per-character (BPC) using the same train/valid/test partition as in~\citet{mikolov2012subword}. All models were trained using truncated back propagation through time~\citep{werbos1990backpropagation} with sequence length of 50, batch size of 128 and using the Adam optimizer \citep{kingma2015adam} to minimize the negative log likelihood. Initial weights were drawn from a uniform random distribution~$\mathcal{U}[-\frac{1}{\sqrt{n}}, \frac{1}{\sqrt{n}}]$. For all experiments, we use the $\tanh$ activation function. For training, we use early stopping and a scheduler to reduce the learning rate (initialized at 0.001) by half on plateaus of the validation loss. All the results presented are on the test set. Each point on the plots is an average over 10 experiments run with different seeds and the variance is represented by a shaded area. 

\paragraph{Importance of hidden size} We first assess how the hidden size affects performance for RNNs, MIRNNs, 2RNNs and CPRNNs of various ranks. Figure~\ref{fig:bpc_vs_hidden} shows that the performance of all models increases with the hidden dimension. We note a slight inflexion point past a hidden size corresponding to the alphabet size, $d=101$. Interestingly, this is also where MIRNN starts to separate from RNN, showing the advantage of multiplicative interactions. As the hidden size increases, we note that the CPRNN's performance becomes closer and closer to the 2RNN's. This is also the case for rank values; as the rank increases, the CPRNN's performance gradually matches the 2RNN's. Both these observations are coherent with our theoretical results. More so, it is interesting to note that the 2RNN still outperforms all other models. This is also consistent with theory as 2RNNs encompass all other RNN models considered in this study. It should be noted that since the hidden size ($n$) directly impacts the number of parameters, the memory requirements of these models are $\mathcal O(nd+ n^2)$ for RNNs and MIRNNs, $\mathcal O(Rn+nd)$ for CPRNNs and $\mathcal O(n^2d)$ for 2RNNs. This difference in model size is also reflected in training time. Reaching early stopping with a hidden size of 512 takes on average 12mins for a RNN, 25mins to 1h for CPRNNs of ranks 1500 to 50, and 1.5h for a 2RNN.

\begin{figure}[t]
\includegraphics[scale=1]{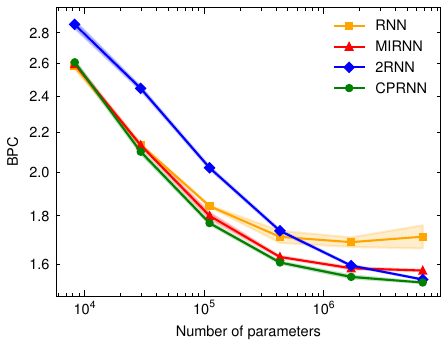}
\caption{BPC as a function of the number of parameters in RNN, MIRNN, 2RNN and CPRNN. Tuning rank and hidden size allows to find a CPRNN that outperforms all other models of same size.}
\label{fig:bpc_vs_params}
\end{figure}

\begin{figure}[t]
\includegraphics[scale=1]{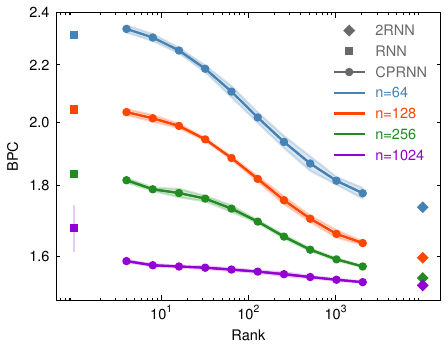}
\caption{BPC of CPRNN as a function of the rank alongside RNN and 2RNN for hidden sizes 64, 128, 256 and 1024. Variation of the rank in CPRNN interpolates between the performance of the first and second order model.}
\label{fig:bpc_vs_rank}
\end{figure}

\paragraph{Comparing models of the same size} 
The observations made in the previous paragraph could simply be explained by the respective parameter count of each model, as the number of parameters directly affects expressivity. As such, we conduct an analysis where we keep model size \textit{constant}. Figure~\ref{fig:bpc_vs_params} gives the BPC of all considered models as a function of parameter count. For CPRNNs, as many configurations of rank and hidden size can correspond to the same parameter count, we choose the best configuration (w.r.t. the validation set) for each fixed model size. The key takeaway of this figure is that it is \textit{always} possible to find a CPRNN which outperforms the other models, regardless of the number of parameters. We also notice that 2RNNs have rather poor performance below the 1M parameter threshold. This can be explained by their small hidden size values (no larger than $n=64$). Interestingly, 2RNNs match the performance of the other models at a parameter count of 1.7M, corresponding to a hidden size of 128, which is just above the input size of $d=101$. This is equally the threshold after which vanilla RNNs start overfitting, hence demonstrating the limited capacity of first-order interactions.

\paragraph{Hidden size vs. rank in CPRNNs}
Going back to the interplay between rank and hidden size, Figure~\ref{fig:bpc_vs_rank} compares the BPC of CPRNNs as a function of the rank for different hidden sizes. We include the BPC values for RNNs and 2RNNs of corresponding hidden sizes on the left and right ends of the figure, respectively. This illustrates how CPRNNs naturally interpolate between first and second-order RNNs.
We observe that for high values of hidden size ($n=1024$), the performance gain is substantial between RNNs and CPRNNs, even for very small ranks. Indeed, with a CPRNN of rank 4, which only increases the number of parameters by 2\%, the BPC already decreases by 59\% of the relative difference between RNN and 2RNN. Similarly to Figures~\ref{fig:bpc_vs_hidden} and~\ref{fig:bpc_vs_params}, we observe a slight inflection point when the rank gets greater then the input size $d=101$. 

\begin{figure}[t]
\includegraphics[scale=1]{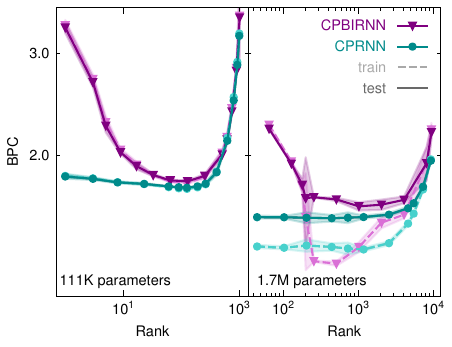}\label{fig:underfitting}
\caption{BPC of CPRNN and CPBIRNN for 111K parameters (left) and 1.7M parameters (right). CPBIRNN underfits when rank or hidden size gets low, while in CPRNN the first-order term maintain performance at low rank (i.e. high hidden size). Training and testing curves are approximately the same for 111K parameters, while they show overfitting at 1.7M of parameters.}
\vspace{-0.5cm}
\end{figure}

\paragraph{Bias-variance tradeoff in CPRNNs}
Figures~\ref{fig:bpc_vs_hidden} and \ref{fig:bpc_vs_rank} illustrate how increasing the rank and hidden size improves the capacity of CPRNNs, which is in line with our theoretical results. When it comes to comparing models at a fixed size, as done in Figure~\ref{fig:bpc_vs_params}, those two parameters are tied together and vary in opposite directions. Figure~\ref{fig:underfitting} presents the BPC of CPRNNs and CPBIRNNs as a function of rank, keeping the number of parameters fixed at 1.7M and 111K. We observe two underfitting regimes for CPBIRNNs, one corresponding to the rank being too low and the other corresponding to the hidden size being too low (which corresponds to high values of the rank). The optimal combination of rank and hidden size thus lies in between, maximizing neither of these parameters. When the number of parameters is sufficiently large (1.7M), the overfitting regime is more sensitive to the hidden size. By comparing CPBIRNNs and CPRNNs, we see the importance of first-order terms in the interplay between rank and hidden size. In CPRNNs, there is almost no underfitting for small rank. Increasing the hidden size makes the first-order term gain importance in the total number of parameters. Thus, the resulting model is similar to a RNN, preserving some expressive power, whilst without this term, the limitation of small rank is clear.

\section{Conclusion}
In this work, we formally and empirically characterize the expressivity of RNN models with various levels of second-order interactions and show how they relate to one another. In particular, we use the rank of CPRNNs as a means of interpolation between RNNs (only first-order interactions), MIRNNs (restricted second-order interactions) and 2RNNs (unrestricted second-order interactions). 
Our analysis sheds light on how of the rank and hidden size affects the expressivity of CPRNNs by providing thresholds above which it saturates. 
We determine in which measure CPRNNs are strictly more powerful than MIRNNs and when they are equivalent to 2RNNs. 
We corroborate our theoretical results with language modelling experiments on the Penn Treebank dataset.
These empirical results demonstrate that for a fixed number of parameters, we can always find CPRNNs that outperform 2RNNs and MIRNNs. This supports the hypothesis that tensor decompositions are an effective approach to optimize the bias-variance tradeoff in machine learning and motivates us to extend our study to other decompositions such as Tensor-Train \citep{oseledets2011tensor} and tensor ring \citep{zhao2016tensor}. Such study could be extended to other classes of models, including state space models and their second-order variants. Additionally, it could be applied to other types of problems, as multiplicative interactions have proven beneficial in contexts beyond sequential learning~\cite{cheng2024multilinear}.
Furthermore, exploring neural network architectures based on tensor network structures opens the possibility to introduce higher-order interactions as well as studying the impact of depth on expressivity along the lines of~\citep{cohen2016convolutional, cohen2016expressive, cohen2016inductive, sharir2017expressive, levine2018benefits, alexander2023makes, razin2024ability}. 
Another interesting avenue for future work is to explore how these model classes compare in terms of approximation instead of exact realization of the functions they compute, similarly to what was done in, e.g., \cite{cohen2016expressive}.

\section*{Impact Statement}
This paper presents work whose goal is to advance the field of Machine Learning. There are many potential societal consequences of our work, none which we feel must be specifically highlighted here.

\section*{Acknowledgement}
M. Lizaire's research is supported by the Natural Sciences and Engineering Research Council of Canada (NSERC, Vanier Scholarship) and IVADO (PhD Excellence Scholarship). G. Rabusseau's research is supported by the CIFAR AI Chair program and NSERC. In addition, we acknowledge material support from NVIDIA Corporation in the form of computational resources.

\section*{References}
\bibliography{biblio.bib}

\clearpage

\onecolumn

\appendix
\section{Proofs}
In this section, we present the proofs of Theorems \ref{thm:rankcprnn}, \ref{thm:hiddencprnn} and Corollary \ref{thm:mirnn}.

\subsection{Proof of Theorem \ref{thm:rankcprnn}}

\begin{manualthm}{\ref{thm:rankcprnn}} 
The following hold for any $n$ and $d$:
\begin{itemize}[leftmargin=*]
    \item $\hCPRNN{R,n} \subseteq \hCPRNN{R+1,n}$ for any $R$.
    \item $\hCPRNN{R, n} = \hCPRNN{R+1, n} $ for any $R\geq R_{max}$.
\end{itemize}
Moreover, assuming  $n\leq d$:  
\begin{itemize}[leftmargin=*]
    \item $\hCPBIRNN{R,n} \subsetneq \hCPBIRNN{R+1,n}$ for any $ R < \Rtypmax$ and any real analytic invertible activation function.
    \item $\hCPRNN{R,n} \subsetneq \hCPRNN{R+1,n}$ for a linear activation function and any $ R < \Rtypmax$.
\end{itemize}   
\end{manualthm}

\begin{proof}
\item
\paragraph{Inclusion for CP(BI)RNN (all bullet points)}
    The proof for the inclusion is the same whether we consider CPRNN or CPBIRNN. We show that for any $h \in \hCPRNN{R, n}$ there exists $\tilde{h} \in \hCPRNN{R+1, n}$ such that $h = \tilde{h}$ to conclude that $\hCPRNN{R,n} \subseteq \hCPRNN{R+1,n}$.
    Let $h$ be parameterized by $\mathcal{A}_{CP}=\langle \vec{h}^0, \Ab, \Bb, \Cb, \Ub, \Vb, \bb, \sigma \rangle$. 
    Now consider $\tilde{h}$ parameterized by $\tilde{\mathcal{A}}_{CP}=\langle \vec{h}^0, \tilde{\Ab}, \tilde{\Bb}, \tilde{\Cb}, \Ub, \Vb, \bb, \sigma \rangle$ using the same activation function, initial hidden state, first-order and bias terms as $\mathcal{A}_{CP}$, and padding the extra dimension of the factor matrices with $0$s, ie. $\tilde{\A}_{ij}=\A_{ij}$ and $\tilde{\A}_{i,{R+1}} = 0$ for all $i=1,\cdots,n$ and all $j \leq R$, likewise for the matrices $\tilde{\Bb}$ and $\tilde{\Cb}$. One can easily check that with these parameterizations $h=\tilde{h}$.

\item 
\textbf{Saturation (second bullet point)}
    We show that $\hCPRNN{R, n} \supseteq \hCPRNN{R+1, n}$ for all $R \geq R_{max}$. This is easy, since any $\tilde{h} \in \hCPRNN{R+1, n}$ can be computed by a CPRNN whose second-order term is parameterized by a minimal CP decomposition of rank $R_{max}$. Thus, a function $h \in \hCPRNN{R, n}$ computed by a CPRNN using this parameterization with zero padding for the extra dimensions is such that $h = \tilde{h}$. We conclude from this and the previous inclusion result ($\hCPRNN{R, n} \subseteq \hCPRNN{R+1, n}$) that $\hCPRNN{R, n} = \hCPRNN{R+1, n} \quad \forall R \geq R_{max}$.
\item
\paragraph{Strict inclusion for CPBIRNN (third bullet point)}
    The fact that the inclusion is strict for $R < \Rtypmax$, i.e. $\hCPBIRNN{R+1, n} \not \subset \hCPBIRNN{R, n}$, is the key technical element of the proof of this theorem. \\
    We begin with the easiest case $R < n$, where we can show that not all maps $\vec x^1 \mapsto \vec h^1$ computed by CPBIRNNs of rank $R+1$ can be computed by CPBIRNNs of rank $R$.
    Consider the linear mapping leading to the first pre-activation vector, $a_1:\vec{x}^1 \mapsto \vec a^1 \in \R^n$. One can easily check that:
    \begin{enumerate}[label=(\roman*)]
        \item the dimension of the image of $a_1$ is upper-bounded by $R$ for CPBIRNNs of rank $R$, i.e. $\text{dim}(a_1(\R^d))\leq R$,
        \item as long as $R<n$, there exist parameters $\A,\B,\C, \vec h^0$ of a CPBIRRN of rank $R+1$ such that the image $a_1(\R^d)$ has dimension exactly equal to $R+1$ (e.g. taking $\diag(\A^\top \vec{h}^0),\B,\C$ full rank), and
        \item the dimension of the manifold formed by the hidden vectors $\vec h^1 = \sigma(\vec a^1)$ is the same as $a_1(\R^d)$ for any invertible activation functions, since they are homeomorphisms.
    \end{enumerate}
    Therefore, any function computed by a CPBIRNN of rank $R+1$ for which the manifold of $\sigma(a_1(\R^d))$ is of dimension $R+1$ can not be computed by a CPBIRNN of rank $R$.
    This concludes the case $R < n \, (\leq \Rtypmax)$: $\hCPBIRNN{R+1, n} \not \subset \hCPBIRNN{R, n}$ for any invertible activation function.
    
    We now turn to the case $R \geq n$. Here an argument over the dimension of the first hidden state space can not be made as it is limited by the hidden size $n$ and not the rank $R$. Therefore, we turn to the computation of the second hidden state $\vec h^2$, looking at the mapping of the second pre-activation vector $a_2(\vec x^1,\vec x^2 ) \mapsto \vec a^2$ ($\vec h^2 = \sigma(\vec a^2)$). 
    For any $h \in \mathcal{H}_{\mathrm{CPBIRNN}}(R,n)$ parameterized by $\mathcal{A}_{CPBI}=\langle \vec{h}^0, \Ab, \Bb, \Cb, \sigma \rangle$, consider the tensor defined by $\St^h_{ijk} = [a_2(\vec{e}_i,\vec{e}_j )]_k$. For intuition, when the activation function is linear, this tensor computes the second hidden state vector $\vec h^2 = \St^h \times_1 \vec x^1 \times_2 \vec x^2$. 
    
    First, one can check that $\St^h = \CP{\sigma(\B \diag(\A^\top \vec{h}^0)\C^\top )\A, \B, \C}$~(where $\sigma$ is applied component-wise) and therefore $$\cprank{\St^h} \leq \cprank{\CP{\A, \B, \C}} \leq R.$$
    
    Second, we show that there exists $\tilde{h} \in \Ht_{\mathrm{CPBIRNN}}(R+1,n)$ parameterized by $\tilde{\mathcal{A}}_{CPBI}=\langle \tilde{\vec{h}}^0, \tilde{\Ab}, \tilde{\Bb}, \tilde{\Cb}, \sigma \rangle$ such that $\cprank{\St^{\tilde{h}}} = R+1$. This part of the proof relies on the following lemma, whose proof is in Appendix \ref{proof:lemma1} and leverages the fact that the probability of drawing $\A,\B,\C$~(from a distribution which is continuous w.r.t. the Lebesgues measure) such that $\cprank{\CP{\A,\B,\C}}=\Rtypmax$ is strictly positive.

\newcommand{\lemmaexistenceCPrankR}{
For all $ n\leq d \leq r \leq\Rtypmax(n,d,n)$, there exist matrices $\mat A, \mat C\in\R^{n\times r}, \mat B \in \R^{d \times r}$ and a vector $\vec h_0\in\R^n$ such that $\cprank{\CP{\mat A,\mat B,\mat C}} = r $ and $\rank(\B\diag(\A^\top\vec h_0)\C^\top) = \rank(\tanh(\B\diag(\A^\top\vec h_0)\C^\top)) = n$ (where $\tanh$ is applied component-wise).}
\begin{lemma} \label{lemma:existence.CP.rank.R}
\lemmaexistenceCPrankR
\end{lemma}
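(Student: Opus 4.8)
The plan is to exhibit the desired matrices explicitly by starting from a generic choice and then perturbing, using two facts in tandem: (i) the set of triples $(\mat A,\mat B,\mat C)$ with $\cprank{\CP{\mat A,\mat B,\mat C}} = \Rtypmax(n,d,n)$ has strictly positive Lebesgue measure (this is the defining property of the maximal typical rank), and (ii) the conditions $\rank(\B\diag(\A^\top\vec h_0)\C^\top) = n$ and $\rank(\tanh(\B\diag(\A^\top\vec h_0)\C^\top)) = n$ are each \emph{generic}, i.e. they fail only on a set of measure zero. Since $r\leq \Rtypmax$, rank exactly $r$ is also typical (ranks between the smallest typical rank and $\Rtypmax$ are all typical, and in any case one can pad a rank-$r$ tensor; I would invoke the stated fact that every integer in $[\,\Rtypmax^{\min},\Rtypmax\,]$ occurs with positive probability, or reduce to $r=\Rtypmax$ and then zero out columns while re-checking the rank conditions). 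Concretely, I would let $(\mat A,\mat B,\mat C,\vec h_0)$ be drawn from any distribution absolutely continuous with respect to Lebesgue measure on $\R^{n\times r}\times\R^{d\times r}\times\R^{n\times r}\times\R^n$; with positive probability the CP rank is $r$, so it suffices to show the two rank-$n$ conditions hold almost surely, and then intersect the two positive-probability events.

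For the condition $\rank(\B\diag(\A^\top\vec h_0)\C^\top)=n$: write $\mat D = \diag(\A^\top\vec h_0)\in\R^{r\times r}$ and note $\B\mat D\C^\top = \sum_{k=1}^r (\A^\top\vec h_0)_k\, \vec b_k \vec c_k^\top$. The map $(\mat A,\mat B,\mat C,\vec h_0)\mapsto \det$ of any fixed $n\times n$ submatrix of $\B\mat D\C^\top$ is a polynomial in the entries, so its zero set has measure zero \emph{unless the polynomial is identically zero}. To rule that out I would exhibit one explicit choice where the rank is $n$: e.g. since $r\geq n$, take $\vec b_k = \vec c_k = \vec e_k$ (the $k$-th canonical vector, so that $\B,\C$ are $\I_n$ padded with arbitrary columns) for $k\le n$ and $(\A^\top\vec h_0)_k\neq 0$ for $k\le n$, giving $\B\mat D\C^\top$ with an invertible $n\times n$ leading block. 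Hence the polynomial is not identically zero and $\rank(\B\mat D\C^\top)=n$ holds on the complement of a measure-zero set.

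For the condition $\rank(\tanh(\B\mat D\C^\top))=n$: this is the more delicate point, since $\tanh$ applied entrywise destroys the polynomial structure. The key observation is that $(\mat A,\mat B,\mat C,\vec h_0)\mapsto \tanh(\B\mat D\C^\top)$ is a \emph{real analytic} map (composition of polynomials with the real-analytic $\tanh$), so each $n\times n$ minor of $\tanh(\B\mat D\C^\top)$ is a real-analytic function of the parameters; a real-analytic function on a connected domain is either identically zero or nonzero off a measure-zero set (this is the analogue of the polynomial argument — the zero set of a nonzero real-analytic function has measure zero). So again it suffices to find one parameter choice making some $n\times n$ minor of $\tanh(\B\mat D\C^\top)$ nonzero. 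Here I would pick $\B=[\I_n \mid \cdots],\ \C=[\I_n\mid\cdots]$ and $\vec h_0$ so that $\mat D$'s first $n$ diagonal entries are, say, a large constant $t>0$ and the rest arbitrary; then the leading $n\times n$ block of $\B\mat D\C^\top$ is $t\I_n$ up to contributions from the extra columns, which one can make diagonally dominant, so $\tanh$ of it is close to a positive-diagonal matrix and invertible. This pins down the nonvanishing minor. \textbf{The main obstacle} is precisely this third step: one must be careful that the analytic-nonvanishing argument is applied on a \emph{connected} parameter domain and that the explicit witness genuinely lies in (or is a limit within) the region where the CP-rank-$r$ event also has positive probability — i.e. the three positive-measure / full-measure events (CP rank $=r$; first rank condition; $\tanh$ rank condition) must be intersected, which is immediate once each is shown to have full measure except the first which has positive measure. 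Finally, any point in that nonempty intersection furnishes the required $\mat A,\mat B,\mat C,\vec h_0$, completing the proof.
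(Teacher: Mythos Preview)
Your proposal is correct and follows essentially the same approach as the paper: draw the parameters at random, invoke the definition of typical rank to get positive probability that $\cprank{\CP{\A,\B,\C}}=r$, use the measure-zero zero set of a nonzero real analytic function for the two rank-$n$ conditions (exhibiting an explicit witness to rule out identical vanishing), and intersect. The paper streamlines slightly by packaging both rank conditions into a single product $F(\A,\B,\C,\vec h_0)=\det[(\B\diag(\A^\top\vec h_0)\C^\top)_{1:n,:}]\cdot\det[\tanh(\B\diag(\A^\top\vec h_0)\C^\top)_{1:n,:}]$ and using one simpler common witness (identity matrices padded with zeros and $\vec h_0$ the all-ones vector, which already makes the $\tanh$ block diagonal and invertible, so your diagonal-dominance argument is unnecessary), but the logic is identical.
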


    It follows from this lemma that there exist $\tilde{\A} \in \Rbb^{n \times (R+1)}$, $\tilde{\B} \in \Rbb^{d \times (R+1)}$, $\tilde{\C} \in \Rbb^{n \times (R+1)}$ and $\tilde{\vec h}_0 \in \R^n$ such that  $\cprank{\CP{\tilde{\A},\tilde{\B},\tilde{\C}}}=R+1$ and $\rank(\tilde\B\diag(\tilde\A^\top\tilde{\vec h}_0)\tilde\C^\top)=\rank(\tanh(\tilde\B\diag(\tilde\A^\top\tilde{\vec h}_0)\tilde\C^\top))=n$. We thus have that  $\St^{\tilde{h}} = \CP{\mat M\tilde{\A}, \tilde{\B},\tilde{\C}}$ where $\mat M\in \R^{d \times n}$ is a left invertible matrix~(both when $\sigma$ is the hyperbolic tangent and the identity), which in turn implies that~(see Lemma~\ref{lem:cprank.invariance.leftinvertiblematrix})
    $$\cprank{\St^{\tilde{h}}}= \cprank{\CP{\mat M\tilde{\A},\tilde{\B},\tilde{\C}}}= \cprank{\CP{\tilde{\A},\tilde{\B},\tilde{\C}}}=R+1.$$
    We conclude that $\tilde{h} \notin \mathcal{H}_{\text{CPBIRNN}}(R,n)$ since  $\cprank{\St^{\tilde{h}}} = R+1$ while $\forall h \in \Ht_{\text{CPBIRNN}}(R,n)$ we have $ \cprank{\St^h} \leq R$. 
\item
\paragraph{Strict inclusion for linear CPRNNs (fourth bullet point)}
    In the case of linear CPRNNs, we again consider the function computing the second hidden state, but this time we only look at the term with second-order interactions between $\vec x^1$ and $\vec x^2$ and use a similar argument to the one used for the strict inclusion of CPBIRNNs. The argument is however slightly simpler in this case and can be used for both cases $R< n$ and $R\geq n$. 
    
    For any $h \in \mathcal{H}_{\mathrm{CPRNN}}(R,n)$ parameterized by $\mathcal{A}_{CP}=\langle \vec{h}^0, \Ab, \Bb, \Cb, \Ub, \Vb, \vec b, \sigma=\I \rangle$, let $h_2:(\vec x^1,\vec x^2 ) \mapsto \vec h^2$ be the associated function mapping the two first inputs to the second hidden state. 
    First observe that since $\sigma=\I$, the associated $h_2$ can be decomposed in four terms: $h_2(\vec x^1,\vec x^2 ) = \alpha(\vec x^1, \vec x^2 ) + \beta(\vec x^1) + \gamma(\vec x^2 ) + \delta$ where $\alpha$ is the bilinear map containing only the second order terms~(i.e. $\alpha$ is a linear map of the Kronecker product of $\vec x^1$ and $\vec x^2$). Note that this decomposition is such that given two functions $h,\hat{h}$, if $\alpha \neq \hat{\alpha}$ then $h\neq \hat{h}$.
    One can check that    
    $$
    \alpha(\vec x^1,\vec x^2 )= \CP{(\U^\top + \B \diag(\A^\top \vec{h}^0)\C^\top )\A, \B, \C} \times_1 \x_1 \times_2 \x_2
    $$
    It follows that defining the tensor  $\St^{\alpha}\in \R^{d\times d \times n}$  by $\St^{\alpha}_{ijk} = [\alpha(\vec{e}_i,\vec{e}_j )]_k$, we have
     $$\cprank{\St^{\alpha}} \leq \cprank{\CP{\A, \B, \C}} \leq R.$$ 
    Now, since $R <\Rtypmax$, there exist parameters of a function $\tilde{h} \in \hCPRNN{R+1,n}$ such that $\cprank{ \CP{\tilde{\A}, \tilde{\B}, \tilde{\C}}}  = R+1$. Moreover, since $n\leq d$, $\tilde{\mat U}\in\R^{n\times d}$ can always be chosen such that $ (\tilde{\U}^\top + \tilde{\B} \diag(\tilde{\A}^\top \tilde{\vec{h}}^0)\tilde{\C}^\top )\in\R^{d\times n}$ is left invertible, in which case~(see again Lemma~\ref{lem:cprank.invariance.leftinvertiblematrix})
    $$\cprank{\St^{\tilde\alpha}}  = \cprank{ \CP{(\tilde{\U}^\top + \tilde{\B} \diag(\tilde{\A}^\top \tilde{\vec{h}}^0)\tilde{\C}^\top ))\tilde{\A}, \tilde{\B}, \tilde{\C}} } = \cprank{ \CP{\tilde{\A}, \tilde{\B}, \tilde{\C}}}  = R+1.$$
      By construction we thus have $\cprank{\St^{\tilde\alpha}} = R+1$ while $\cprank{\St^\alpha} \leq R$ for all $h \in \hCPRNN{R,n}$, hence $\tilde{h} \not \in \hCPRNN{R,n}$.

\end{proof}

\subsubsection{Proof of Lemma \ref{lemma:existence.CP.rank.R}}\label{proof:lemma1}

\newcommand{\lemmaexistenceCPrankR}{
For all $ n\leq d \leq r \leq\Rtypmax(n,d,n)$, there exist matrices $\mat A, \mat C\in\R^{n\times r}, \mat B \in \R^{d \times r}$ and a vector $\vec h_0\in\R^n$ such that $\cprank{\CP{\mat A,\mat B,\mat C}} = r $ and $\rank(\B\diag(\A^\top\vec h_0)\C^\top) = \rank(\tanh(\B\diag(\A^\top\vec h_0)\C^\top)) = n$ (where $\tanh$ is applied component-wise).}
\begin{manuallemma}{\ref{lemma:existence.CP.rank.R}}
\lemmaexistenceCPrankR
\end{manuallemma}

The proof of the lemma relies on the following result whose proof can be found in \citep{mityagin2020zero}\footnote{Mityagin, Boris Samuilovich. "The zero set of a real analytic function." Mathematical Notes 107.3-4 (2020): 529-530.}. 

\begin{lemma}\label{lemma:realanalytic}
Let $A(x)$ be a real analytic function on (a connected open domain
$U$ of) $\R^d$. If $A$ is not identically zero, then its zero set
$$ Z (A) := \{x \in U \mid A(x) = 0\}$$
has a zero measure.
\end{lemma}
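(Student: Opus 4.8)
The plan is to prove the statement by induction on the ambient dimension $d$, combining Fubini/Tonelli slicing of $\R^d = \R^{d-1}\times\R$ with the identity theorem for real analytic functions. Throughout I write $m_k$ for Lebesgue measure on $\R^k$ and $Z(A)$ for the zero set. I would note at the outset that $Z(A)$ is closed, since $A$ is continuous, hence Borel measurable, so all slices and iterated integrals appearing below are well defined.

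For the base case $d=1$, a real analytic function $A$ on a connected open interval that is not identically zero has only isolated zeros: if some zero were an accumulation point of $Z(A)$, then all derivatives of $A$ would vanish there, and analyticity together with connectedness would force $A\equiv 0$. A set of isolated points in $\R$ is countable, hence $m_1$-null. For the inductive step I assume the claim in dimension $d-1$. Since a countable union of null sets is null and $U$ can be covered by countably many open boxes $Q = Q'\times I \subseteq U$ with $Q'\subseteq\R^{d-1}$ a box and $I\subseteq\R$ an interval, it suffices to prove $m_d(Z(A)\cap Q)=0$ for each such box. Here the identity theorem enters the first time: because $A\not\equiv 0$ on the connected set $U$, $A$ cannot vanish on any nonempty open subset, so $A|_Q\not\equiv 0$.

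Applying Tonelli to the indicator of $Z(A)\cap Q$ then gives
$$ m_d(Z(A)\cap Q) = \int_{Q'} m_1\big(\{t\in I : A(x',t)=0\}\big)\, dx'. $$
For each fixed $x'$ the slice map $t\mapsto A(x',t)$ is real analytic on $I$, so by the one-dimensional case its zero set is $m_1$-null unless the slice vanishes identically. Writing $B = \{x'\in Q' : A(x',\cdot)\equiv 0 \text{ on } I\}$, the integrand equals $m_1(I)$ on $B$ and $0$ off $B$, so $m_d(Z(A)\cap Q)= m_1(I)\,m_{d-1}(B)$, and it remains only to show $m_{d-1}(B)=0$.

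The crux, and the step I expect to be the main obstacle, is controlling this exceptional set $B$ of entirely vanishing slices. I would fix $t_0\in I$ and introduce the real analytic functions $h_k(x') = \partial_t^k A(x',t_0)$ on $Q'$; since $t\mapsto A(x',t)$ is analytic on the interval $I$, one has $x'\in B$ precisely when $h_k(x')=0$ for all $k\geq 0$, so $B=\bigcap_k Z(h_k)$ is closed. The identity theorem enters a second time: if every $h_k$ vanished identically on $Q'$, then every slice would vanish and $A|_Q\equiv 0$, contradicting $A|_Q\not\equiv 0$; hence some $h_{k_0}\not\equiv 0$ on the connected box $Q'$. The induction hypothesis gives $m_{d-1}(Z(h_{k_0}))=0$, and since $B\subseteq Z(h_{k_0})$ we obtain $m_{d-1}(B)=0$, completing the induction. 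The points needing care are the measurability bookkeeping in the Tonelli slicing and the two invocations of the identity theorem (that $A$ vanishes on no box, and that the $h_k$ cannot all vanish), which are exactly what rules out the degenerate scenario in which the bad slice set $B$ carries positive measure.
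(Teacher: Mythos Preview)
The paper does not give its own proof of this lemma; it simply cites \citet{mityagin2020zero}. Your argument is correct and is essentially the standard induction-on-dimension proof via Fubini/Tonelli and the identity theorem for real analytic functions, which is the same approach taken in the cited reference.
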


\begin{proof}[Proof of Lemma \ref{lemma:existence.CP.rank.R}]
Let $\A,\B,\C,\vec h_0$ be randomly drawn from a distribution which is continuous w.r.t. the Lebesgue measure. On the one hand, since $r \leq\Rtypmax(n,d,n)$, the measure of the set of matrices $\A,\B,\C$ such that $\cprank{\CP{\A,\B,\C}} =r$ is stricly positive, and thus the probability that the random matrices $\A,\B,\C$ are such that $\cprank{\CP{\A,\B,\C}} =r$ is strictly greater than $0$. On the other hand, consider the real analytic function 
$$F(\mat A,\mat B, \mat C, \vec h_0) = \mathrm{det}[(\B\diag(\A^\top\vec h_0)\C^\top)_{1:n,:}] \cdot  \mathrm{det}[\tanh(\B\diag(\A^\top\vec h_0)\C^\top)_{1:n,:}] $$
where the notation $\mat M_{1:n,:}$ denotes the matrix containing the first $n$ rows of $\mat M$.
One can check that by letting $\tilde \A,\tilde\B,\tilde\C$ be identity matrices padded with 0's and $\tilde{ \vec h}_0$ be a vector full of ones, we have $F(\tilde\A,\tilde\B, \tilde\C,\tilde{ \vec h}_0) \neq 0$, and thus $F$ is not identically zero. 
It then follows from Lemma~\ref{lemma:realanalytic} that the set of $\A,\B,\C,\vec h_0$ such that $F(\mat A,\mat B, \mat C, \vec h_0)=0$ has measure 0. Hence, with probability one, both $\B\diag(\A^\top\vec h_0)\C^\top$ and $\tanh(\B\diag(\A^\top\vec h_0)\C^\top)$ are full rank, i.e. of rank $n$~(since $n\leq d \leq r$). 
Combining these two observations lead to the existence of matrices $\A,\B,\C$ and a vector $\vec h_0$ satisfying the claim of the lemma.
\end{proof}

\subsection{Proof of Theorem \ref{thm:hiddencprnn}}

\begin{manualthm}{\ref{thm:hiddencprnn}} 
    The following hold for any $d$ and $n$: 
\begin{itemize}[leftmargin=*]
    \item $\linearmaps{n,q} \circ \hCPBIRNN{R,n} \subseteq \linearmaps{n+1,q} \circ \hCPBIRNN{R,n+1}$ for any $R$ and $n$.
    \item $\linearmaps{n,q} \circ \hCPBIRNN{R,n} = \linearmaps{n+1,q} \circ \hCPBIRNN{R,n+1}$ for any $\ n \geq R$ and linear activation function.
 \end{itemize}
 Moreover, assuming  $n\leq d$:
\begin{itemize}[leftmargin=*]
    \item $\linearmaps{n,q} \circ \hCPBIRNN{R,n} \subsetneq \linearmaps{n+1,q} \circ \hCPBIRNN{R,n+1}$ for any $n<R$ and any invertible activation function satisfying $\sigma(0)=0$.
\end{itemize}
\end{manualthm}

\begin{proof}
\item
\paragraph{Inclusion (all bullet points)}  
    We first show that  $\linearmaps{n,q} \circ \hCPBIRNN{R,n}\subseteq \linearmaps{n+1,q} \circ \hCPBIRNN{R,n+1}$ (both when $n<R$ and $n \geq R$).
    Consider a function $l \circ h \in \linearmaps{n,q} \circ \hCPBIRNN{R,n}$ with $l$ defined by $l(\vec v) = \mat W \vec v$ for some $\mat W$ in $\Rbb^{q \times n}$ and $h$ parameterized by the CPBIRNN $\mathcal{A}_{CPBI}=\langle \vec{h}^0, \Ab, \Bb, \Cb, \sigma \rangle$. 
    Now consider the function $\tilde l \circ \tilde h$ in $\linearmaps{n+1} \circ \hCPBIRNN{R,n+1}$ with 
    $\tilde l$ defined by $\tilde{l}(\vec v) = \tilde{\mat W} \vec v$ where $\tilde{\mat W}^\top= \left(\mat W^\top \atop 0 \cdots 0\right)$ and $\tilde h$ parameterized by $\tilde{\mathcal{A}}_{CPBI}=\langle \tilde{\vec{h}}^0, \tilde{\Ab}, \tilde{\Bb}, \tilde{\Cb}, \sigma \rangle$ where $\tilde{\mat A} = \left(\mat A \atop 0 \cdots 0\right)$, $\tilde{\mat C} = \left(\mat C \atop 0 \cdots 0\right) $ and $\tilde{\vec h}_0 = \left(\vec h_0 \atop 0\right)$. 
    One can check that $\tilde l \circ \tilde h = l \circ h$. This construction thus shows that for any function in $\linearmaps{n} \circ \hCPBIRNN{R,n}$ we can find an equivalent function in $\linearmaps{n+1} \circ \hCPBIRNN{R,n+1}$, i.e. $\linearmaps{n,q} \circ \hCPBIRNN{R,n} \subseteq \linearmaps{n+1,q} \circ \hCPBIRNN{R,n+1}$.
\item
\paragraph{Saturation (second bullet point)} 
    Since we already showed that $\linearmaps{n,q}  \circ \hCPBIRNN{R,n} \subseteq \linearmaps{n+1,q} \circ \hCPBIRNN{R,n+1}$ for any invertible activation function (including linear ones), we just need to show that 
    $\linearmaps{n+1,q} \circ \hCPBIRNN{R,n+1} \subseteq \linearmaps{n,q} \circ \hCPBIRNN{R,n}$. Let $\tilde{h} \in \hCPBIRNN{R,n+1}$ parameterized by $\tilde{\mathcal{A}}_{CPBI}=\langle \tilde{\vec{h}}^0, \tilde{\A}, \tilde{\B}, \tilde{\C}, \sigma=\mat I \rangle$  and $\tilde{l}:\vec v \mapsto \tilde{\mat W} \vec v \in \linearmaps{n+1,q}$. 
    Since $R<n+1$, we can factorize $\tilde{\A}$ and $\tilde{\C}$ into $\tilde{\A} = \Pb\A$ and 
    $\tilde{\C} = \C \mat Q$ where  
    $\A, \C \in\R^{n\times R}$ and $\mat P, \mat Q \in \R^{n+1\times n}$ are rank $n$.
    The functions $h \in \hCPBIRNN{R,n}$ parameterized by $\mathcal{A}_{CPBI}=\langle \vec{h}^0 = \mat P^{\top}\tilde{\vec{h}}^0, \A, \tilde{\B},\C, \sigma=\mat I \rangle$ and $l:\vec v \mapsto \mat Q^{\top}\tilde{\mat W} \vec v \in \linearmaps{n,q}$ are such that $\tilde{l} \circ \tilde{h} = l \circ h$ which shows that $\linearmaps{n+1,q} \circ \hCPBIRNN{R,n+1} \subseteq \linearmaps{n,q} \circ \hCPBIRNN{R,n}$. Note that without the restriction of a linear activation function is key here as it is what allows the factorization matrix $\mat Q^{\top}$ to be introduced in the parameterization of $l$ ($\mat Q^{\top}\tilde{\mat W}$). 
\item 
\paragraph{Strict inclusion (third bullet point)} 
    We now show that the inclusion is strict for $n<R$, i.e., that
    $$\linearmaps{n+1,q} \circ \hCPBIRNN{R,n+1} \not \subset \linearmaps{n,q} \circ \hCPBIRNN{R,n}.$$
    
    It suffices to show the result for $q=1$ which we will do. Indeed, suppose the result is true for $q=1$ and let $\tilde{l} \circ \tilde{h} \in \linearmaps{n+1,1} \circ  \hCPBIRNN{R,n+1}$ be s.t. $\tilde{l} \circ \tilde{h} \neq l \circ h$ for all $l \circ h \in \linearmaps{n,1}\circ \hCPBIRNN{R,n}$. It is easy to see that, for any $q\geq 1$, the map $\bar{l}:\vec v \mapsto (\tilde{l}(\vec v)\ 0\ \cdots\ 0)^\top \in \linearmaps{n+1,q}$ is such that $\bar{l} \circ \tilde{h} \not \in \linearmaps{n, q} \circ \hCPBIRNN{R,n}$. Then the proof goes as follow:
    \begin{enumerate}[label=(\roman*)]
        \item First, observe that for any $h \in \hCPBIRNN{R,n}$  parameterized by $\mathcal{A}_{CPBI}=\langle \vec{h}^0, \Ab, \Bb, \Cb, \sigma \rangle$, the image of the first pre-activation mapping   $a_1: \vec x^1 \mapsto \vec a^1 = \CP{\mat A,\mat B,\mat C} \times_1 \vec h^0 \times_2 \vec x^1$ is a linear space of dimension at most $n$, i.e. $\dim(a_1(\R^d))\leq n$.
        \item Second, consider $\tilde{h} \in \hCPBIRNN{R,n+1}$ parameterized by $\tilde{\mathcal{A}}_{CPBI}=\langle \tilde{\vec{h}}^0, \tilde{\Ab}, \tilde{\Bb}, \tilde{\Cb}, \sigma \rangle$ such that $\diag(\tilde{\A}^\top \tilde{\vec h}_0)$, $\tilde{\mat B}$ and $\tilde{\mat C}$ are full rank matrices (e.g. $\tilde{\A}_{1:}$ is a row of 1's and $\tilde{\vec{h}}^0= \vec{e}_1$). Since $n < R \leq d$, one can check that the image of $\tilde{a}_1: \vec x^1\mapsto \tilde{\vec a}^1$ has dimension $n+1$, i.e. $\dim(\tilde{a}_1(\R^d))=n+1$.
        \item Third, let $z: \vec v \mapsto (\vec v \; 0)^\top$ be a mapping that augments the dimension while leaving the original transformation intact.     
    We have that $z\circ a_1(\R^d)$ and $\tilde{a}_1(\R^d)$ are both linear subspaces of $\R^{n+1}$ and that $\dim(z\circ a_1(\R^d))<\dim(\tilde{a}_1(\R^d))$. 
    \end{enumerate}

    It thus follows from Lemma~\ref{lemma:diff.ima.dims.compose.linear.form}  that $\linearmaps{n+1,1} \circ \sigma \circ \tilde{a}_1 \not \subset \linearmaps{n+1,1} \circ \sigma \circ z\circ a_1 $. Since for any activation function such that $\sigma(0) =0$ we have  $\linearmaps{n,1} \circ \sigma  = \linearmaps{n+1,1} \circ \sigma \circ z$, we conclude that $\linearmaps{n+1,1} \circ \sigma \circ \tilde{a}_1 \not \subset \linearmaps{n,1} \circ \sigma \circ a_1 $. We thus have shown that there exists $\tilde{h} \in \hCPBIRNN{R,n+1}$ such that for any $h \in \hCPBIRNN{R,n}$ we have $\linearmaps{n+1,1} \circ \tilde{h}_1 \not \subset \linearmaps{n,1} \circ h_1 $ (where $h_1$ and $\tilde{h}_1$ denote the function mapping the first input vector to the first hidden state of the respective CPRNNs), which concludes the proof. 

\end{proof}

\subsubsection{Proof of Lemma \ref{lemma:diff.ima.dims.compose.linear.form}}
\begin{manuallemma}{\ref{lemma:diff.ima.dims.compose.linear.form}}
Let $V$ be a vector space of dimension $d$,   $\phi, \psi : \mathcal{X} \to V$  two maps whose images $\phi(\mathcal{X})$ and  $\psi(\mathcal{X})$ are subspaces of $V$ and $\sigma $ an homeomorphism. Lastly, let $\linearmaps{}(V)$ denote the set of all linear forms on $V$~(i.e. $\linearmaps{}(V)$ is the dual space $V^*$).
 
If $\dim(\phi(\mathcal{X})) > \dim(\psi(\mathcal{X}))$, then $\linearmaps{}(V) \circ \sigma \circ \phi \not \subset \linearmaps{}(V) \circ \sigma \circ\psi$.
\end{manuallemma}

\begin{proof}
We suppose $\linearmaps{}(V) \circ \sigma \circ \phi \subseteq \linearmaps{}(V) \circ \sigma \circ\psi$ and show it leads to a contradiction. 
Let $l_1, \dots l_d$ be $d$ linear forms from $\linearmaps{}(V)$ that are linearly independent. By hypothesis, there exist $\tilde{l}_1, \dots \tilde{l}_d$ such that $l_i \circ \sigma \circ \phi = \tilde{l}_i \circ \sigma \circ \psi$ for all $i=1,\dots,d$. Consider the mappings $\nu, \tilde{\nu}: V \to V$ given by $\nu: u \mapsto \sum_{i=1}^d l_i(u) v_i$ and $\tilde{\nu}: u \mapsto \sum_{i=1}^d \tilde{l}_i(u) v_i$ where $v_1, \dots, v_d$ is an arbitrary basis of $V$. The previous equality ($l_i \circ \sigma \circ \phi = \tilde{l}_i \circ \sigma \circ \psi$) implies that $\nu \circ \sigma \circ \phi = \tilde{\nu} \circ \sigma \circ \psi$.
On the one hand, since $\nu$ is invertible by construction, $\nu \circ \sigma$ is an homeomorphism and the dimension of the manifold (noted $\dim_{\mathcal{M}}$) of the image of $\nu \circ \sigma \circ \phi$ is the same as the dimension of the image of $\phi$, i.e. $$\dim_{\mathcal{M}}(\nu \circ \sigma \circ \psi (\mathcal{X})) = \dim(\phi (\mathcal{X})).$$
On the other hand, since $\tilde{\nu}$ is a linear transformation it can not increase the dimension of the manifold, thus $$\dim_{\mathcal{M}}(\tilde{\nu} \circ \sigma \circ \psi(\mathcal{X})) \leq \dim(\psi(\mathcal{X}))<\dim(\phi(\mathcal{X})).$$
This is a contradiction since the hypothesis led to $\nu \circ \sigma \circ \phi = \tilde{\nu} \circ \sigma \circ \psi$ which implies $\dim_{\mathcal{M}}((\tilde{\nu} \circ \sigma \circ \psi)(\mathcal{X})) = \dim_{\mathcal{M}}((\tilde{\nu} \circ \sigma \circ \phi)(\mathcal{X}))$.
\end{proof}

\subsection{Proof of Corollary~\ref{thm:mirnn}}
\begin{manualcor}{\ref{thm:mirnn}} 
Assuming $n\leq d$, for any $ R > n $,
\begin{itemize}[leftmargin=*]
    \item $\hMIRNN{n}   \subseteq \hCPRNN{R,n}$ 
    \item $\hMIRNN{n}   \subsetneq \hCPRNN{R,n}$ for linear activation function
\end{itemize}
\end{manualcor}

\begin{proof}
\item
\paragraph{Inclusion} 
    We show that for any $\hMIRNN{n}\subseteq \hCPRNN{R,n}$ for $R \geq n$. Note the inclusion is valid for CPRNNs and without restrictions on the first order and biais terms of MIRNNs.
    First observe that an equality between the second-order terms of a CPRNN and a MIRNN corresponds to
    $\sum_{r=1}^R \Ab_{ir} \Bb_{jr} \Cb_{kr} = \boldsymbol{\alpha}_k\Vb_{ki} \Ub_{kj}$
    for all $i,k=1,\cdots,n$ and $j=1,\cdots, d$.
    Thus, for any $\bar{h} \in \hMIRNN{n}$ parameterized by $\mathcal{A}_{MI}=\langle \vec{h}^0, \bar{\Ub}, \bar{\Vb}, \boldsymbol{\alpha}, \boldsymbol{\beta_1}, \boldsymbol{\beta_2}, \bb, \sigma \rangle$ a CPRNN given by $\mathcal{A}_{CP}=\langle \vec{h}^0, \Ab, \Bb, \Cb, \Ub = \boldsymbol{\beta_1 \odot \bar{\Ub}}, \Vb = \boldsymbol{\beta_2 \odot \bar{\Vb}}, \bb, \sigma \rangle$ with 
    $$
    \A_{ir}=
    \begin{cases}
    \bar{\Vb}_{ir} & \text{if } r,i=1\dots n \\
    \boldsymbol{0} & \text{else}
    \end{cases}
    \quad
    \B_{jr}=
    \begin{cases}
    \bar{\Ub}_{jr} & \text{if } r = 1\dots n \; j=1\dots d \\
    0 & \text{else}
    \end{cases} 
    \C_{kr}=
    \begin{cases}
    \boldsymbol{\alpha}_{k} & \text{if } k=r \\
    0 & \text{else}
    \end{cases} 
    $$ 
    computes a function $h$ that is equal to $\bar{h}$.
\item 
\paragraph{Strict inclusion} 
The strict inclusion comes from the observation that since we have $\hMIRNN{n}\subseteq \hCPRNN{R,n}$. As Theorem~\ref{thm:rankcprnn} states that for linear activation function and $R<\Rmax$ 
$\hCPRNN{R,n}\subsetneq \hCPRNN{R+1,n}$, it follows that for any $R>n$ and a linear activation function $\hMIRNN{n}\subsetneq \hCPRNN{R,n}$.
\end{proof}

\subsection{Additional Lemmas}

\begin{lemma}\label{lem:cprank.invariance.leftinvertiblematrix}
    Let $\T$  be a tensor of CP rank $R$ and let $\T = \CP{\A,\B,\C}$ be a minimal (i.e. rank $R$) CP decomposition of $\T$. Then, for any left invertible matrix $\M$, $\cprank{\T} = \cprank{\CP{\M\A,\B,\C}}$.
\end{lemma}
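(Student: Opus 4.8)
The plan is to exploit the fact that multiplying the first factor matrix of a CP decomposition by a matrix $\mat N$ amounts to contracting the first mode of the underlying tensor with $\mat N$, and that such a linear map on one mode can only weakly decrease the CP rank; left invertibility of $\M$ then lets us run the inequality in both directions.

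First I would record the elementary observation that for any matrix $\mat N$ of compatible size and any factor matrices $\mat P,\mat Q,\mat S$ with columns $\vec p_r,\vec q_r,\vec s_r$, one has $\CP{\mat N\mat P,\mat Q,\mat S}=\sum_r (\mat N\vec p_r)\circ\vec q_r\circ\vec s_r$, i.e.\ it is exactly the tensor obtained from $\CP{\mat P,\mat Q,\mat S}$ by contracting its first mode with $\mat N$. In particular, if $\CP{\mat P,\mat Q,\mat S}$ is written as a sum of $\rho$ rank-one tensors then so is $\CP{\mat N\mat P,\mat Q,\mat S}$, which gives $\cprank{\CP{\mat N\mat P,\mat Q,\mat S}}\le\cprank{\CP{\mat P,\mat Q,\mat S}}$; applied to a minimal decomposition this says that contracting mode $1$ with an arbitrary matrix never increases the CP rank. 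I would also note that contracting mode $1$ successively with $\mat N_1$ then $\mat N_2$ is the same as contracting once with $\mat N_2\mat N_1$.

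Applying the non-increase property with $\mat N=\M$ to the given minimal decomposition $\T=\CP{\A,\B,\C}$ yields $\cprank{\CP{\M\A,\B,\C}}\le\cprank{\T}=R$. For the reverse inequality I would use that $\M$ is left invertible: choose $\M\pinv$ with $\M\pinv\M=\I$. Contracting $\CP{\M\A,\B,\C}$ along its first mode with $\M\pinv$ returns $\CP{\M\pinv\M\A,\B,\C}=\CP{\A,\B,\C}=\T$, so the same property gives $\cprank{\T}\le\cprank{\CP{\M\A,\B,\C}}$. Combining the two inequalities gives $\cprank{\CP{\M\A,\B,\C}}=R=\cprank{\T}$, as claimed.

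There is no real obstacle here; the only point requiring care is notational, since the paper reserves $\times_n$ for mode-$n$ products with \emph{vectors}. I would therefore either introduce the mode-$1$ matrix contraction explicitly or, more cleanly, phrase the whole argument at the level of the rank-one summands $(\M\vec a_r)\circ\vec b_r\circ\vec c_r$ so that no new notation is needed, and make sure the hypothesis that is actually invoked is exactly left invertibility ($\M\pinv\M=\I$), not two-sided invertibility.
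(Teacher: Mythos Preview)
Your proposal is correct and follows essentially the same approach as the paper: both directions use that applying a matrix to the first factor can only weakly decrease the CP rank, with the reverse inequality obtained via a left inverse $\M\pinv$ of $\M$. The only cosmetic difference is that the paper phrases the reverse direction as a proof by contradiction (a smaller decomposition of $\CP{\M\A,\B,\C}$ would, after multiplying by $\M\pinv$, yield a smaller decomposition of $\T$), whereas you state the monotonicity principle once and apply it twice.
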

\begin{proof}
    Let $\vec a_1,\cdots, \vec a_R$ be the columns of $\A$, $\vec b_1,\cdots, \vec b_R$ the columns of $\B$ and $\vec c_1,\cdots, \vec c_R$ be the columns of $\C$. 

    First, it is trivial to show that $R \geq \cprank{\CP{\M\A,\B,\C}}$ since $\CP{\M\A,\B,\C}$ is a rank $R$ CP decomposition. To show the result, we need to show that this CP decomposition is minimal. Suppose it is not the case, i.e., there exists a rank $S<R$ CP decomposition $\CP{\M\A,\B,\C} = \CP{\tilde\A, \tilde\B, \tilde \C}$ where $\tilde\A, \tilde\B, \tilde \C$ all have $S$ columns. Then, one can easily check that  $\T = \CP{\M^+\tilde\A, \tilde\B, \tilde \C}$, where $\M^+$ is the left inverse of $\M$, i.e. $\T$ admits a CP decomposition of rank $S < R$, a contradiction. 
    
\end{proof}
\end{document}